\title{Meta Representation Learning with Contextual Linear Bandits}
\title{Meta-Learning Representations with Contextual Linear Bandits}
\author[1]{Leonardo Cella \thanks{leonardocella@gmail.com}}
\author[2]{Karim Lounici \thanks{karim.lounici@polytechnique.edu}}
\author[1,3]{Massimiliano Pontil \thanks{massimiliano.pontil@iit.it}}
\affil[1]{Computational Statistics and Machine Learning, Italian Institute of Technology, Italy}
\affil[2]{CMAP, Ecole Polytechnique, France}
\affil[3]{Department of Computer Science, University College London, U.K.}
\newcommand{\indep}{\perp \!\!\! \perp}
\newcommand{\alphabf}{\bm{\alpha}}
\newcommand{\alphahat}{\widehat{\bm{\alpha}}}
\newcommand{\A}{\mathbf{A}}
\newcommand{\Acal}{\mathcal{A}}
\newcommand{\Abar}{\overline{\mathbf{A}}}
\newcommand{\Bbf}{\mathbf{B}}
\newcommand{\bbf}{\mathbf{b}}
\newcommand{\B}{\mathbf{B}}
\newcommand{\Bbar}{\overline{\mathbf{B}}}
\newcommand{\Bcal}{\mathcal{B}}
\newcommand{\Bhat}{\widehat{\mathbf{B}}}
\newcommand{\Ccal}{\mathcal{C}}
\newcommand{\Dcal}{\mathcal{D}}
\newcommand{\Deltabf}{\mathbf{\Delta}}
\newcommand{\e}{\mathbf{e}}
\newcommand{\E}{\mathbb{E}}
\newcommand{\etab}{\pmb{\eta}}
\newcommand{\Fbar}{\overline{\mathcal{F}}}
\newcommand{\Fcal}{\mathcal{F}}
\newcommand{\h}{\mathbf{h}}
\newcommand{\Hbf}{\mathbf{H}}
\newcommand{\I}{\mathbf{I}}
\newcommand{\Ind}{\mathbb{I}}
\newcommand{\lambdamin}{\lambda_{\min}}
\newcommand{\lambdamax}{\lambda_{\max}}
\newcommand{\M}{\mathbf{M}}
\newcommand{\Mtilde}{\widetilde{\M}}
\newcommand{\bigO}{\mathcal{O}}
\newcommand{\Pbf}{\mathbf{P}}
\newcommand{\Phat}{\widehat{\Pbf}}
\newcommand{\reals}{\mathbb{R}}
\newcommand{\rhat}{\widehat{r}}
\newcommand{\Shat}{\widehat{S}}
\newcommand{\Sigmabf}{\mathbf{\Sigma}}
\newcommand{\Sigmahat}{\widehat{\Sigmabf}}
\newcommand{\Sigmabar}{\overline{\Sigmabf}}
\newcommand{\Sigmabarhat}{\widehat{\overline{\Sigmabf}}}
\newcommand{\Sb}{\mathbb{S}}
\newcommand{\Scal}{\mathcal{S}}
\newcommand{\ubf}{\mathbf{u}}
\newcommand{\uhat}{\widehat{\ubf}}
\newcommand{\bu}{\mathbf{u}}
\newcommand{\bv}{\mathbf{v}}
\newcommand{\vbf}{\mathbf{v}}
\newcommand{\V}{\mathbf{V}}
\newcommand{\vhat}{\widehat{\vbf}}
\newcommand{\Vhat}{\widehat{\mathbf{V}}}
\newcommand{\w}{\mathbf{w}}
\newcommand{\what}{\mathbf{\widehat{w}}}
\newcommand{\W}{\mathbf{W}}
\newcommand{\What}{\widehat{\W}}
\newcommand{\Wbar}{\overline{\mathbf{W}}}
\newcommand{\x}{\mathbf{x}}
\newcommand{\X}{\mathbf{X}}
\newcommand{\Xcal}{\mathcal{X}}
\newcommand{\xbar}{\overline{\mathbf{x}}}
\newcommand{\Y}{\mathbf{Y}}
\newcommand{\Nold}{N^{\rm tr}}
\newcommand{\ntrain}{\Nold}
\newcommand{\Abf}{\mathbf{A}}
\newtheorem{lemma}{Lemma}
\newtheorem{assumption}{Assumption}
\newtheorem{theorem}{Theorem}
\newtheorem{corollary}{Corollary}
\newtheorem{definition}{Definition}
\newcommand{\norm}[1]{\left\lVert#1\right\rVert}
\newcommand{\opnorm}[1]{\left\lVert#1\right\rVert_{\rm op}}
\newcommand{\nucnorm}[1]{\left\lVert#1\right\rVert_{\ast}}
\DeclareMathOperator*{\argmax}{\ensuremath{argmax}}
\DeclareMathOperator*{\argmin}{\ensuremath{argmin}}
\newtcolorbox[auto counter,number within=section]{protocol}[1][]{
  enhanced,
  breakable,
  fonttitle=\scshape,
  title={Protocol \thetcbcounter},
  #1
}
\begin{document}

\maketitle

\begin{abstract}
Meta-learning seeks to build algorithms that rapidly learn how to solve new learning problems based on previous experience.
In this paper we investigate meta-learning in the setting of stochastic linear bandit tasks. We assume that the tasks share a low dimensional representation, which has been partially acquired from previous learning tasks. 
We aim to leverage this information in order to learn a new downstream bandit task, which 
shares the same representation.
Our principal contribution is to show 
that if the learned representation estimates well the unknown one, then the downstream task can be efficiently learned by a greedy policy that we propose in this work. We derive an upper bound on the regret of this policy, which is, up to logarithmic factors, of order $r\sqrt{N}(1\vee \sqrt{d/T})$, where $N$ is the horizon of the downstream task, $T$ is the number of training tasks, $d$ the ambient dimension and $r \ll d$ the dimension of the representation. We highlight that our strategy does not need to know $r$. We note that if $T> d$ our bound achieves the same rate of optimal minimax bandit algorithms using the true underlying representation. 
Our analysis is inspired and builds in part upon previous work on meta-learning in the i.i.d. full information setting \citep{tripuraneni2021provable,boursier2022trace}. As a separate contribution we show how to relax certain assumptions in those works, thereby improving their representation learning and risk analysis.
\end{abstract}


\section{Introduction}

In the last years the problem
of meta-learning and transferring knowledge between a set of learning tasks has emerged as a functional area of machine learning. While a lot of work has been done in the full information i.i.d. statistical setting \citep{baxter2000model,khodak2019adaptive,pontil2013excess,maurer2016benefit,denevi2018incremental,denevi19a,tripuraneni2021provable,boursier2022trace}, investigation of meta-learning in the partial information interactive setting are lacking and we are only aware of few recent works \citep{cella2020,yang2022nearly}. More work has been done on the multitask bandit setting \citep{kveton2021meta,hu2021near,yang2020impact,cellaSparseMTL}, 
however the goal there is different, in that we wish to learn well a prescribed finite set of tasks as opposed to leverage knowledge from these to learn a novel downstream task.
 

In this paper we focus on meta-representation learning with contextual linear bandit tasks. 
We assume that the tasks share a low dimensional representation, which has been partially acquired from previously observed (interactive or bandit) learning tasks and wish to leverage
this information in order to learn a new downstream bandit task, sharing the same representation. The interactive, non i.i.d. nature of the tasks, induces a trade-off between exploration and exploitation and present several challenges to derive learning guarantees. 
Even in the statistical setting, it was recently pointed out in \cite{tripuraneni2021provable,boursier2022trace} that, despite a long line of works on meta-representation learning, the theoretical understanding of the underlying phenomena is incomplete, and sharper learning bounds on meta-representation learning can be derived when additional assumptions on the problem are made. 
When considering contextual linear bandits, the theoretical analysis becomes even more complex. To the best of our knowledge, we are the first to consider this setting, obtaining promising results even when the number of interactions is 
smaller than the ambient dimension. Our analysis is inspired by the aforementioned works. Surprisingly, if compared to \citep{boursier2022trace} we obtain a less computation demanding estimator under more relaxed assumptions. In particular, this yields a minimax bandit greedy algorithm.

\paragraph{Contributions.} We make three main contributions. Firstly, we investigate the benefit of meta-representation learning relying on a set of $T$ training tasks sharing the same representation of the current bandit task (Theorem \ref{Th:RegretUBound}). Our results are inspired and built upon \citep{tripuraneni2021provable,boursier2022trace} 
but we notably remove their invertibility assumption on the input covariance matrix, so that our analysis can cover more general high-dimensional settings. 
Moreover, we propose a more practical representation estimator which does not need to know the representation dimension. 
 Secondly, we address the scenario in which the past information corresponds to completed contextual bandit tasks and we give a bound on estimation quality of the representation~(Theorems \ref{Thm:NewBias1} and \ref{Th:NewBias2}). To this end we extend the arguments of \cite{tripuraneni2021provable} to the bandit setting where samples are not i.i.d. anymore by exploiting martingale tools. Thirdly, we highlight that our regret bound (Corollary \ref{cor:main}) matches the optimal minimax lower bound and we do not impose any norm boundedness assumption on the arms, unlike typically done in the bandit literature. Finally, we remark that our results also hold in the i.i.d. statistical setting, proving significant improvements over the analysis in the aforementioned works, both in terms of computational efficiency and assumptions.

\paragraph{Literature Review.} 
If focusing on meta-learning a representation in the standard supervised learning setting the most relevant papers are \citep{tripuraneni2021provable,boursier2022trace}. The latter work adopts a truncated-SVD approach to estimate the shared representation {requiring the knowledge of the rank of the representation}. On the other side, the former relies on the method-of-moments. Empirically, it has been observed that the truncated-SVD solution outperforms the one based on the method-of-moments \citep{boursier2022trace}. In addition, it also works in poor data regimes, for any number of samples. \\
In the bandits literature the closest works to our are \citep{cella2022multi} and \citep{yang2022nearly}.The former work investigates the multitask bandit setting, presenting a greedy policy based on trace-norm regularization and providing a regret bound on the set of training tasks. The latter work considers the lifelong learning (or meta-learning) framework as we do in this work and further analyzes the multi-task learning setting with infinite arms. They propose a specific variant of the explore-then-commit policy. However, differently from the standard bandit literature \citep{abbasi2011improved,lattimore2020bandit} they assume the noise to be i.i.d. and distributed as a standard Gaussian random variable. Therefore, it is not clear how the noise variance impacts the obtained theoretical guarantees and how the proposed strategy handles possible different values. They also assume to know the rank parameter $r$.


\paragraph{Notation.} We denote vectors by bold lowercases and matrix by bold uppercases. Given a vector $\x\in\reals^d$ we denote with $\norm{\x}$ its Euclidean norm. For any matrix $\W\in\reals^{d\times T}$ we use $\norm{\W}_F$ for its Frobenious norm (Euclidean norm of its singular values), $\norm{\W}_{\rm op}=\sigma_{\max}(\W)$ for its operator norm (largest singular value) and denote its rank by $\rho(\A)$. Given a pair of matrices $\A,\B\in\reals^{d\times d}$, the expression $\A\succeq\B$ means that $\A-\B$ is positive semi-definite. Given a square matrix $\Sigmabf$, we use $\lambda_i(\A)$ to denote its $i$-th largest eigenvalue. Analogously, we respectively use $\lambda_{\min}(\Sigmabf)$ and $\lambda_{\max}(\Sigmabf)$ for the minimum and the maximum eigenvalues. A similar notation holds for singular values, we denote them by $\sigma_i(\A), \sigma_{\max}(\A)$ and $\sigma_{\min}(\A)$. 
Finally, for any positive integer $T$, we denote the corresponding set of integers $\{1,\dots,T\}$ by $[T]$.

\section{Problem Setting}
In this section, we 
first recall the linear contextual bandit setting and then introduce the meta-learning framework with bandit tasks.

\subsection{Linear Represented Contextual Bandits}
Let $N$ and $K$ be positive integers. 
A contextual linear bandit problem is defined as a sequence of $N$ interactions between a learning agent  and the environment. At each interaction  $n\in[N]$, the agent is given a decision set $\Dcal_n = \{\x_{n,1}, \dots, \x_{n,K}\} \subseteq \reals^d$ from which it has to pick one arm $\mathbf{x}_{n}\in\Dcal_n$ among the $K$ available ones. Subsequently, the agent observes the corresponding reward $y_n$ which is assumed to be a noisy linear regression with respect to an unknown vector parameter $\w\in\reals^d$. In this work our focus is on linear contextual bandits sharing an unknown representation $\B\in\reals^{d\times r}=(\bbf_1,\dots,\bbf_r)$ with $r$ orthonormal columns and $d>r$. That is, we consider that $\w=\B \alphabf$, for some $\alphabf \in \mathbb{R}^r$, so that 
the observed reward satisfies 
\begin{equation}\label{Eq:LinearReward}
  y_{n} = \x_n^\top \B \alphabf + \eta_n,
  = \x_n^\top \w + \eta_n,
\end{equation}
where $\eta_n$ is a noise random variable which we specify below. Relying on the knowledge of the true regression vector $\alphabf$ and the representation matrix $\B$, at each round $n\in[N]$ the optimal policy  
selects $\mathbf{x}^*_n = \arg\max_{\mathbf{x}\in \Dcal_n} \mathbf{x}^\top \B \alphabf$, maximizing the instantaneous expected reward. The learning objective is then to minimize the \textit{pseudo-regret} incurred with respect to the optimal policy,
    \begin{equation}
    	R(N,\alphabf) = \sum_{n=1}^{N}(\mathbf{x}^*_n - \mathbf{x}_{n})^\top \B \alphabf. \nonumber
    \end{equation}
\noindent For the meta-learning setting discussed in this work, we use a relaxation of the arm assumption considered in \cite{cella2022multi} for the multi-task case, the regression vector and the noise variables. Note the standard literature on linear contextual bandits \citep{auer2002using,abbasi2011improved,lattimore2020bandit} impose a restriction on the norm of the context arms $\norm{\x}\leq L$ for some absolute constant $L>0$. We do not impose any such condition.

\begin{assumption}[Feature set and parameter]\label{Ass:BoundedNorms} At each round $n\in[N]$, the decision set $\Dcal_{n}\in\reals^d$ consists of $K$ $d$-dimensional vectors $\x_k$ admitting representation $\x_k = \Sigma_k^{1/2}\mathbf{z}_k$ where $\Sigma_k$ is the covariance operator of $\x_k$ and $\mathbf{z}_k$ is a $d$-dimensional vector with independent subGaussian entries admitting zero mean and variance $1$. We set $\max_{1\leq k \leq K}\left\lbrace \|\mathbf{z}_k\|_{\psi_2}\right\rbrace <C_{\mathbf{z}}$ for some absolute constant $C_{\mathbf{z}}>0$. Specifically, we assume the tuples $\Dcal_{1},\dots,\Dcal_{N}$ to be drawn i.i.d. from an fixed by unknown zero mean subGaussian joint distribution  $p$ on $\mathbb{R}^{Kd}$. We denote by $C_{\x}$ the subGaussian Orlicz norm of the $K$ marginal distributions of $p$ corresponding to the $K$ arms, that is 
\[
C_{\x} := \max_{1\leq k \leq K}\left\lbrace \|\x_k\|_{\psi_2}\right\rbrace \leq  C_{\mathbf{z}} \max_{1\leq k \leq K}\left\lbrace \opnorm{\Sigma_k}^{1/2} \right\rbrace <\infty.
\]
Finally, coherently with the standard bandit literature \citep{abbasi2011improved,lattimore2020bandit}, we assume that $\norm{\w}\leq  L$, for some constant $L>0$.
\end{assumption}
\noindent 
From the previous assumption it follows that each arm $k$ admits zero mean, square-integrable marginal distribution with covariance operator $\Sigma_k\in\reals^{d\times d}$. In our approach, vectors associated to different arms are allowed to be correlated between each other.\\ Following the recent result of \cite{cella2022multi} for the multi-task setting, and the works of \cite{oh2020sparsity,bastani2021mostly} for sparse  high-dimensional bandits, we introduce the following assumption on the arms distribution. This mild assumption allows  the design of exploration-free policies which avoid random plays during arms' selection.
\begin{assumption}[Arms distribution]\label{Ass:ArmsDistribution} 
There exists a constant $\nu<\infty$ such that $p(-\xbar)/p(\xbar)\leq\nu$, for every $\xbar \in \reals^{Kd}$. Additionally, let us consider a permutation $(a_1,\dots,a_K)$ of $(1,\dots,K)$. For any integer $i\in\{2,\dots,K{-}1\}$ and any fixed vector $\w\in\reals^d$, there exists a constant $\omega_\Xcal<\infty$ such that
\begin{equation*}
    \E\left[\x_{a_i} \x_{a_i}^\top \Ind\left\{\x_{a_1}^\top \w<{\cdots}<\x_{a_K}^\top\w\right\}\right] \preceq \omega_{\Xcal} \E\left[ \left( \x_{a_1} \x_{a_1}^\top {+} \x_{a_K}\x_{a_K}^\top \right) \Ind\left\{\x_{a_1}^\top \w <\cdots<\x_{a_K}^\top\w\right\}\right].
\end{equation*}
\end{assumption}
\noindent 
Please notice that Assumption \ref{Ass:ArmsDistribution} characterizes a wide class of distributions, both discrete and continuous, e.g. Gaussian, multi-dimensional and uniform. The parameter $\nu$ controls the skewness of the distribution $p$; if $p$ is symmetric then $\nu=1$. The value $\omega_\Xcal$ depends on the arms correlation. The stronger the correlation, the smaller $\omega_\Xcal$ will be. Finally, when arms are generated i.i.d. from a multivariate Gaussian or a multivariate uniform distribution over the sphere we have $\omega_{\Xcal}=O(1)$.

\noindent Let us now define the filtration $(\Fcal_{n})_{n\geq 0}$ on a probability space $(\Omega,\mathcal{A},\mathbb{P})$ as follows: $\Fcal_{0}$ is the trivial  $\sigma$-field $\{\emptyset,\Omega\}$, and for any $n\geq 1$,
$$
\Fcal_{n}=\sigma\left(\x_1,\eta_1,\dots,\x_{n},\eta_n\right).
$$
We can now define the standard subGaussian noise assumption \citep{abbasi2011improved,bastani2020online,cella2020}.
\begin{assumption}[subGaussian noise]\label{Ass:subGaussnoise} 
The noise variables $(\eta_{t,n})_{t,n}$ are a sequence of subGaussian random variables adapted to the filtration $\{\Fcal_{n}\}_{n\geq 0}$ and such that for any $t \in [T]$ and  $n\geq 1$,
$$
\E[\eta_{t,n}|\Fcal_{n-1}] = 0,\quad\text{and}\quad \E[\eta_{t,n}^2|\Fcal_{n-1}] \leq \sigma^2,
$$
and $\eta_1,\ldots,\eta_n$ are mutually independent conditionally on $\Fcal_{n-1}$. We denote by $c_{\eta}$ the subGaussian norm of the $\eta_{n}$'s, that is, $\max_{n\in[N]}\{\|\eta_{n}\|_{\psi_2}\} \leq c_\eta$.
\end{assumption}
To conclude, we introduce the following notation referring to the different considered covariance matrices associated with the sequence of chosen contexts. They will be used throughout the paper.
\paragraph{Covariance Matrices.} We indicate the theoretical $d \times d$ covariance matrix as 

\begin{equation}
\label{eq:S-111}
    \Sigmabf = \frac{1}{K}\E\left[\sum_{k=1}^K \x_k \x_k^\top \right] = \frac{1}{K} \sum_{k=1}^K \Sigma_k,
\end{equation} where the expectation is over the decision set sampling distribution $p$ 
which is assumed to be shared between the tasks. We denote the empirical covariance matrix as
\begin{equation}
\label{eq:S-222}
\Sigmahat_{n}=\frac{1}{n}\sum_{s=1}^n\x_{s} \x_{s}^\top.
\end{equation}

\subsection{Meta-Learning Linear Contextual Bandits}\label{SubSec:LTL}
In this paper we address the meta-learning problem. We assume that we observe a set of $T$ training linear contextual bandit tasks $\w_1,\dots,\w_T$ which
share the same set of low-dimensional features $\B$. That is we have that $\W_T=\B\A_T$ where $\B\in\reals^{d\times r}$ and $\A_T=[\alphabf_1,\dots,\alphabf_T]\in\reals^{r\times T}$ such that $\B^\top\B=\I_r$. After completing each task $t$ we store the whole set of $\Nold$ interactions in a dataset $Z_t = (\x_{t,n}, Y_{t,n})_{n=1}^{\Nold}$, $t \in [T]$. Each such dataset
corresponds to the recording of the learning policy with the environment when solving the corresponding task. {Given the past dataset $Z_1,\dots,Z_T$ corresponding to already completed tasks}, our goal is to minimize the transfer regret $ R({T},{N})$, which is defined as
\begin{equation}\label{Eq:LTLRegret}
    R({N},{\alphabf_{T+1}}) {=} \sum_{n=1}^N (\x_{T+1,n}^* {-} \x_{T+1,n})^\top \B {\alphabf_{T+1}}
\end{equation}
where $N$ denotes the number of rounds for the current task and the we defined the optimal arm.
$\x_{T+1,n}^*=\arg\max_{\x\in\Dcal_{T+1,n}}\x^\top \B {\alphabf_{T+1}}$.

\noindent The approach we consider consists in estimating the representation $\B_T$ shared across the $T$ related tasks, and then use it to minimize the transfer regret (\ref{Eq:LTLRegret}). In Section \ref{Sec:GreedySolution} we propose and analyze a bandit policy which is provided with an estimator of the representation. Then in Section  \ref{Sec:ReprLearning} we discuss two estimators of the representation, based on the recordings $Z_1,\dots,Z_T$.

\section{A Greedy Solution}\label{Sec:GreedySolution}
In the standard meta-learning setting $T$ datasets containing i.i.d. samples for each of the training tasks  are given, see 
\citep{boursier2022trace,denevi19a,maurer2016benefit,tripuraneni2021provable}, from which the learner extracts a representation and uses it as prior knowledge for solving the next task. In this paper, we relax the aforementioned assumption allowing each past task $j$ to be associated with a dataset $Z_j$ containing the whole history of interactions $(\x_{j,n},Y_{j,n})_{n=1}^{\Nold}$ consisting of the sequence of chosen arms and the observed rewards.\\
In this section, let us assume that we are given an estimate $\Bhat_T$ of rank $\tilde{r} = \rho(\Bhat_T)$ of the true representation $\B$. At any interaction $n\in[N]$ of task $T+1$ we will then estimate the true regression vector $\alphabf_{T+1}$ as
\begin{equation}\label{Eq:alphahat}
    \alphahat_{T+1,n} \in \argmin_{\alphahat\in\reals^{
    \tilde{r}}}\sum_{i=1}^n  \left(y_{T+1,i} - \x_{T+1,i}^\top (\Bhat_T \alphahat)\right)^2.
\end{equation}

By design, the greedy policy will then pick the next arm as
\[
    \x_{T+1,n} \in \argmax_{\x\in\Dcal_{T+1,n}} \langle \x, \Bhat_T\alphahat_{T+1,n}\rangle.
\]
We can design a greedy bandit algorithm (whose pseudocode is given in Algorithm \ref{Alg:Greedy}) whose objective is to minimize the transfer regret (\ref{Eq:LTLRegret}) using $\Bhat_T$ as prior information for solving the next task.

\begin{algorithm}[t]\caption{Meta-Represented Greedy Policy}\label{Alg:Greedy}

\begin{algorithmic}[1]
\REQUIRE Confidence parameter $\delta$, noise variance $\sigma^2$
\STATE We are given a representation estimate $\Bhat_T$
\STATE At round $n=1$ arm is picked randomly
\STATE Observe $Y_{T+1,1}$
\FOR{$n \in 2,\dots,N$} 
    \STATE update $\alphahat_{T+1,n}$ according to (\ref{Eq:alphahat}) 
    \STATE observe $\Dcal_{T+1,n}$ satisfying Ass. \ref{Ass:ArmsDistribution}
    \STATE pick $\x_{T+1,n} \in \argmax_{\x\in\Dcal_{T+1,n}} \langle \x, \Bhat_T\alphahat_{T+1,n}\rangle$
    \STATE observe reward $Y_{T+1,n}$ defined as in (\ref{Eq:LinearReward})
\ENDFOR
\end{algorithmic}
\end{algorithm}
\subsection{Bounding the Transfer Regret}

We begin by proving a regret bound on the $T+1$ downstream task, which holds under the assumption 
that, there exists a function $n \mapsto \gamma(T,n)$ and integer $N_0$, such that, whenever $n \geq N_0$, given the estimates of the representation $\Bhat_T$ and the current task vector $\alphahat_{T+1,n}$ we have
\begin{equation}\label{Eq:OracleBound}
    \norm{\Bhat_T\alphahat_{T+1,n} - \B\alphabf_{T+1}} \leq \gamma(T,n).
\end{equation}
In the following section we will derive an expression for the integer $N_0$ and the function $\gamma$ for the specific representation estimators.  We are now ready to prove a high probability bound to the transfer-regret defined in (\ref{Eq:LTLRegret}) in terms of the estimation error bound $\gamma(T,n)$ incurred at any round $n\in[N]$ greater than $N_0$.
\begin{theorem}\label{Th:RegretUBound}Let Assumptions \ref{Ass:BoundedNorms}, \ref{Ass:ArmsDistribution}, \ref{Ass:subGaussnoise} and the condition in \eqref{Eq:OracleBound} be satisfied. Then we have with probability at least $1-\delta$ 
\begin{equation}
   R({N},\alphabf_{T+1}) \leq \bigO\left( \sqrt{\tilde{r}} \bigvee \sqrt{\log(KN\delta^{-1})}\right) \left( N_0 +\sum_{n=N_0+1}^N \gamma(T, n) \right).
\end{equation}
\end{theorem}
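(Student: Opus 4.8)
The plan is to bound the instantaneous regret $\rho_n := (\x^*_{T+1,n}-\x_{T+1,n})^\top\w$, with $\w:=\B\alphabf_{T+1}$, separately on the two horizon segments $\{n\le N_0\}$ and $\{n>N_0\}$, engineering every bound so that only the rank $\tilde{r}$ of $\Bhat_T$ (never the ambient dimension $d$) enters the arm factor. The device for this is to project onto $\mathrm{range}(\Bhat_T)$: write $Q$ for the orthogonal projection onto this $\tilde{r}$-dimensional subspace, $\I-Q$ for its complement, and note that the greedy estimate $\widehat{\w}_n:=\Bhat_T\alphahat_{T+1,n}$ satisfies $Q\widehat{\w}_n=\widehat{\w}_n$. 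I would split $\rho_n = (\x^*_{T+1,n}-\x_{T+1,n})^\top Q\w + (\x^*_{T+1,n}-\x_{T+1,n})^\top(\I-Q)\w$ and treat the two pieces differently.

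On the segment $n>N_0$ I would exploit the greedy rule. For the $Q$-piece, insert $\widehat{\w}_n$: since $\langle\x^*_{T+1,n}-\x_{T+1,n},\widehat{\w}_n\rangle\le0$ by the $\argmax$ in line 7 of Algorithm~\ref{Alg:Greedy}, and $Q\widehat{\w}_n=\widehat{\w}_n$, we get $(\x^*_{T+1,n}-\x_{T+1,n})^\top Q\w\le (Q(\x^*_{T+1,n}-\x_{T+1,n}))^\top(\w-\widehat{\w}_n)\le\norm{Q(\x^*_{T+1,n}-\x_{T+1,n})}\,\gamma(T,n)$, where the last step is Cauchy--Schwarz together with \eqref{Eq:OracleBound}. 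For the complementary piece, observe that $(\I-Q)\widehat{\w}_n=0$, hence $(\I-Q)\w=(\I-Q)(\w-\widehat{\w}_n)$ has norm at most $\gamma(T,n)$; writing $\ubf:=(\I-Q)\w/\norm{(\I-Q)\w}$ for the corresponding fixed unit direction gives $(\x^*_{T+1,n}-\x_{T+1,n})^\top(\I-Q)\w\le |(\x^*_{T+1,n}-\x_{T+1,n})^\top\ubf|\,\gamma(T,n)$. Thus $\rho_n\le\big(\norm{Q(\x^*_{T+1,n}-\x_{T+1,n})}+|(\x^*_{T+1,n}-\x_{T+1,n})^\top\ubf|\big)\gamma(T,n)$.

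On the segment $n\le N_0$, where \eqref{Eq:OracleBound} is unavailable, I would keep the same split but bound the inner products crudely: $\norm{Q\w}\le\norm{\w}\le L$ and $\norm{(\I-Q)\w}\le L$ by Assumption~\ref{Ass:BoundedNorms}, so $\rho_n\le\big(\norm{Q(\x^*_{T+1,n}-\x_{T+1,n})}+|(\x^*_{T+1,n}-\x_{T+1,n})^\top\ubf|\big)L$ with the \emph{same} geometric factor. It then remains to control this factor uniformly in $n$. Conditioning on the past recordings $Z_1,\dots,Z_T$ freezes $\Bhat_T$, hence both $Q$ and $\ubf$, while the task-$(T{+}1)$ arms stay independent of them and, by Assumption~\ref{Ass:BoundedNorms}, take the form $\x_k=\Sigma_k^{1/2}\mathbf{z}_k$ with independent subGaussian coordinates. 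Applying the Hanson--Wright inequality to $\norm{Q\x_k}^2=\mathbf{z}_k^\top\Sigma_k^{1/2}Q\Sigma_k^{1/2}\mathbf{z}_k$ (mean $\tr(Q\Sigma_k)\le\tilde{r}\,\opnorm{\Sigma_k}$) and a scalar subGaussian tail to $\x_k^\top\ubf$, then union-bounding over the $K$ arms and $N$ rounds, yields with probability at least $1-\delta/2$ that $\max_{k\in[K]}\norm{Q\x_{T+1,n,k}}\lesssim C_\x(\sqrt{\tilde{r}}+\sqrt{\log(KN/\delta)})$ and $\max_{k\in[K]}|\x_{T+1,n,k}^\top\ubf|\lesssim C_\x\sqrt{\log(KN/\delta)}$ simultaneously for all $n$; since $\x^*_{T+1,n}$ and $\x_{T+1,n}$ both belong to $\Dcal_{T+1,n}$, the geometric factor is $\mathcal{O}(\sqrt{\tilde{r}}\vee\sqrt{\log(KN\delta^{-1})})$.

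Combining the two segments, the regret is at most this uniform factor times $\big(L\,N_0+\sum_{n=N_0+1}^N\gamma(T,n)\big)$, and absorbing $L$ and $C_\x$ into $\mathcal{O}(\cdot)$ reproduces the claimed bound; the probability budget $\delta$ is split between the arm-concentration event and the event carrying \eqref{Eq:OracleBound}. The main obstacle is precisely the dimension reduction in the last step: one must argue that the regret is governed only by the low-rank range of $\Bhat_T$ plus a single residual direction, not by all of $\reals^d$, and that $\Bhat_T$ can be frozen before invoking Hanson--Wright because it is a function of the independent past $Z_1,\dots,Z_T$. A pleasant by-product of bounding the maximum over the whole decision set $\Dcal_{T+1,n}$ is that the randomly played first arm and the data-dependent greedy choices need no separate treatment, since we never use the specific law of the selected arm.
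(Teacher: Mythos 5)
Your proof is correct and follows the same skeleton as the paper's: split the horizon at $N_0$, use the greedy inequality $\langle \x^*_{T+1,n}-\x_{T+1,n},\Bhat_T\alphahat_{T+1,n}\rangle\le 0$ together with Cauchy--Schwarz and \eqref{Eq:OracleBound} after round $N_0$, bound the first $N_0$ rounds crudely through $\norm{\w}\le L$, and control the projected arm norms uniformly via Hanson--Wright plus a union bound over the $K$ arms and $N$ rounds (this last step is exactly the paper's Lemma \ref{lem:norm-arms}). The one genuine difference is the choice of projection: the paper inserts a single orthogonal projection $\Pbf$ onto $\Im(\Bhat_T)+\Im(\Bbf)$, of rank at most $\rho(\Bhat_T)+\rho(\Bbf)$, exploiting that $\B\alphabf_{T+1}-\Bhat_T\alphahat_{T+1,n}$ lies in that sum of ranges; you instead project onto $\Im(\Bhat_T)$ alone (rank exactly $\tilde r$) and handle the leftover component $(\I-Q)\w$ through the single fixed unit direction $\ubf$, whose inner products with the arms are controlled by a scalar subGaussian tail. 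The two devices are interchangeable and give the same order, but yours is marginally finer and cleaner: the rank fed into Hanson--Wright is $\rho(\Bhat_T)$ plus one dimension, so the factor $\sqrt{\tilde r}$ in the bound carries literally the definition $\tilde r=\rho(\Bhat_T)$ from Section \ref{Sec:GreedySolution}, whereas the paper's proof silently redefines $\tilde r$ as $\rho(\Bhat_T)+\rho(\Bbf)$ (harmless asymptotically, since both are $O(r)$ once the rank of $\What_T$ is controlled, but an inconsistency your route avoids). Your justification for invoking Hanson--Wright --- conditioning on $Z_1,\dots,Z_T$ freezes $Q$ and $\ubf$ while the task-$(T+1)$ decision sets remain independent draws from $p$, and taking the maximum over all of $\Dcal_{T+1,n}$ covers both the data-dependent greedy choice and the optimal arm --- is precisely the argument the paper relies on as well.
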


If we assume in addition that the number of rounds for the $T$ training tasks is sufficiently large, then we can replace $\tilde{r}$ by $r$ the rank of the representation $\Bbf$. This is done in Corollary \ref{cor:main}. See also Appendix \ref{sec:rankcontrol} for more details.

\begin{proof}
We can start controlling the transfer regret as
\begin{align*}
    R({N},\alphabf_{T+1})  &\leq  \sum_{n=1}^{N_0} L \norm{\Pbf \left(\x^*_{T+1,n} - \x_{T+1, n} \right)} + \sum_{n=N_0}^N \langle \x^*_{T+1,n} - \x_{T+1,n}, \B \alphabf_{T+1} \rangle \\
    &\leq \sum_{n=1}^{N_0} L \max_{1\leq n\leq N}\max_{\x\in \mathcal{D}_{T+1,n}} 2 \norm{\Pbf(\x)} +  \sum_{n=N_0}^N r_{T+1,n}
\end{align*}
where we denoted by $\Pbf$ the projection matrix onto $\Im(\Bhat_T)+\Im(\Bbf)$, where $\Im(\cdot)$ denotes the range on an operator. Note that $\Pbf$ is of rank at most $\rho(\Bhat_T)+\rho(\Bbf):=\tilde{r}$. Then Lemma \ref{lem:norm-arms} (applied only on one arm: the $(T+1)$-th arm) gives for any $\delta\in (0,1)$ with probability at least $1-\delta/4$,
\begin{equation*}
    \max_{n\in[N]}\max_{\x_{T+1,n}\in \mathcal{D}_{T+1,n}} \|\Pbf(\x_{T+1,n})\|_2^2 \leq C \max_{1\leq k \leq K}\{\opnorm{\Sigma_k}\}\left( \tilde{r} \bigvee  \log(8\delta^{-1}KN)\right), 
\end{equation*}
for some constant $C>0$ that depends only on  $C_{\mathbf{z}}$. Indeed, the quantities appearing in Lemma \ref{lem:norm-arms} 
becomes $\mathrm{trace}(\Pbf\Sigma_{k}\Pbf)\leq \max_{1\leq k\leq K}\{\opnorm{\Sigma_k}\}\, \tilde{r}$, $\opnorm{\Pbf\Sigma_{k}\Pbf}\leq \max_{1\leq k\leq K}\{\opnorm{\Sigma_k}\}$ and 
$\|\Pbf\Sigmabf_{k}\Pbf\|_{\rm F}\leq \max_{1\leq k\leq K}\{\opnorm{\Sigma_k}\} \sqrt{\tilde{r}}$ for any $k\in [K]$.  Combining the two last displays we get with probability at least $1-\delta/4$,
\begin{equation}
\label{eq:RTNbound}
    R({N},\alphabf_{T+1})  \leq  C^{1/2} L\sqrt{\max_{1\leq k\leq K}\{\opnorm{\Sigma_k}\}}\left( \sqrt{\tilde r} \bigvee \sqrt{\log(8KN\delta^{-1})}\right) N_0  + \sum_{n=N_0}^N r_{T+1,n}.
\end{equation}

Considering the greedy algorithm displayed in Figure \ref{Alg:Greedy}, its instantaneous regret $r_{T+1,n}$ satisfies
\begin{eqnarray*}
    r_{T+1,n} &\leq & \langle \x_{T+1,n}^*, \B\alphabf_{T+1} - \Bhat_T\alphahat_{T+1,n} \rangle + \langle \x_{T+1,n}, \Bhat_T \alphahat_{T+1,n} - \B\alphabf_{T+1} \rangle + \\ 
    &~&~~~+  \langle \x^*_{T+1,n} - \x_{T+1,n}, \Bhat_T \alphahat_{T+1,n} \rangle \leq 
    \langle \x^*_{T+1,n} - \x_{T+1,n}, \B\alphabf_{T+1} - \Bhat_T\alphahat_{T+1,n} \rangle 
    ,
\end{eqnarray*}
where we used that for any $1\leq n \leq N$, $\langle \x_{T+1,n}^* - \x_{T+1,n}, \Bhat_T \alphahat_{T+1,n} \rangle\leq 0$ a.s. by definition of our policy. The Cauchy-Schwartz inequality combined with \eqref{Eq:OracleBound} then yields 
\begin{align*}
    r_{T+1,n} &\leq
    \langle \x^*_{T+1,n} - \x_{T+1,n}, \B\alphabf_{T+1} - \Bhat_T\alphahat_{T+1,n} \rangle \\
    &\leq \norm{\Pbf\left(\x^*_{T+1,n} - \x_{T+1,n}\right)} \norm{\B\alphabf_{T+1} - \Bhat_T\alphahat_{T+1,n}}\\
    &\leq \norm{\Pbf\left(\x^*_{T+1,n} - \x_{T+1,n}\right)}\gamma(T,n).
\end{align*}

An union bound combining \eqref{eq:RTNbound} with the last three displays (with $x$ replaced by $\log(3\delta^{-1}N)$, $\delta\in (0,1)$) gives, with probability at least $1-\delta$,
\begin{align*}
    R(T,N) \leq C' \left( \sqrt{\tilde{r}} \bigvee \sqrt{\log(8KN\delta^{-1})}\right) \left( N_0 +\sum_{n=N_0+1}^N \gamma(T, n) \right),
\end{align*}
for some constant $C' = C'\left(c_{\eta}, C_{\mathbf{z}},\sigma,\nu ,\omega_\Xcal,\kappa\big(\Sigmabar\big),\max_{k}\left\lbrace\|\Sigma_{k}^{1/2}\|_{\mathrm{op}},L\right\rbrace\right)>0$.
\end{proof}

\section{Learning the Representation}\label{Sec:ReprLearning}
In this section we consider the problem of estimating the hidden representation $\B$ by $\Bhat_T$. Before proposing our solution, we will estimate $\B$ by improving the approach by \citep{boursier2022trace,tripuraneni2021provable} that was originally proposed for the i.i.d. case. Then, in the second half of the section we will introduce our approach. Not only it relaxes the assumptions of \cite{boursier2022trace,tripuraneni2021provable} but also requires less computations and is agnostic to the value of the rank $r$.

Similarly to what has been done in \citep{boursier2022trace}, let us consider $\W_T = \B \A_T \in \reals^{d\times T}$ where $\A_T = [\alphabf_1,\dots,\alphabf_T]^\top$.
As it was also investigated in \citep{cella2022multi}, we consider the trace-norm estimator
\begin{equation}\label{Eq:What}
    \What_T \in \argmin_{\W \in \reals^{d\times T}}\left\{ \frac{1}{\Nold T} \sum_{t,n}^{T,\Nold} \left(y_{t,n} - \langle \x_{t,n}, \w_{t,n} \rangle \right)^2+ \lambda_{\Nold} \norm{\W}_* \right\}.
\end{equation}
Now, we compute the following decomposition of $\What_T$ to obtain $\Bhat_T$:
\begin{align}
\label{eq:Bhatdecompose}
  \What_T=\Bhat_T \widehat{\bf{A}}_T,\quad \Bhat_T\in\reals^{d\times \rhat}\;\;\text{with}\;\; \Bhat_T^\top \Bhat_T = \I_{\rhat},\;\;\text{and}\;\; \widehat{\bf{A}}_T\in\reals^{\rhat\times T}.
\end{align}
Here we set $\rhat = \rho(\What_T)$ equal to the rank of matrix $\What_T$.\\
An alternative choice to estimate the representation is to follow the same approach of \citep{boursier2022trace}, which requires the knowledge of the rank $r$ of $\W$. It then consists in computing the closest rank $r$ approximation of $\What_T$: 
\begin{equation}\label{Eq:Wbar}
    \Wbar_T \in \argmin_{\W \in \reals^{d\times T}: \rho(\W)\leq r} \norm{\W - \What_T}_{\rm F}.
\end{equation}
Next we compute the following decomposition: 
\begin{align}
\label{eq:Bdecompose}
  \Wbar_T=\Bbar_T \Abar_T,\quad \Bbar_T\in\reals^{d\times r}\;\;\text{with}\;\; \Bbar_T^\top \Bbar_T = \I_r,\;\;\text{and}\;\; \Abar_T\in\reals^{r\times T}.
\end{align}
We will then use $\Bbar_T$ as an estimate of the representation $\B$ for the new $T+1$-th task.

Before going on with the analysis we need to recall the restricted strong convexity (RSC) condition, a standard definition from high-dimensional statistics. Notice, that this was used in \cite{cellaSparseMTL} for the multi-tasks stochastic linear bandits. Denote by $\Sigmabar = \mathrm{diag}(\Sigmabf,\ldots,\Sigmabf)\in\reals^{dT\times dT}$.
 
\begin{definition}[RSC($\overline{r}$) Condition]\label{Def:RSC}
Let $\overline{\W}\in \reals^{d\times T}$ admits singular value decomposition $\overline{\W}={\bf U}{\bf D}{\bf V}^\top$ , and let ${\bf U}^{\overline{r}}$ and ${\bf V}^{\overline{r}}$ be the submatrices formed by the top $\overline{r}$ left and right singular vectors, respectively.
We say that the restricted strong convexity RSC($\overline{r}$) condition is met for the theoretical multi-task matrix $\Sigmabar\in\reals^{dT\times dT}$, with positive constant $\kappa(\Sigmabar,\overline{r})$ if
\[
\min\left\{
    \frac{\norm{\rm{Vec}(\Deltabf)}_{\Sigmabar}^2}{2 \norm{\rm{Vec}(\Deltabf)}_2^2}: \Deltabf \in \Ccal(\overline{r})
\right\} \geq \kappa(\Sigmabar,\overline{r}).
\]
We used $\rm{Vec}(\Deltabf)$ for the $\reals^{dT}$ vector obtained by stacking together the columns of $\Deltabf$, and $\Ccal(r)$ is defined as
\begin{equation}\label{Eq:Cset}
    \left\{ \Deltabf \in \reals^{d\times T}:\; \nucnorm{\Pi(\Deltabf)} \leq 3 \nucnorm{\Deltabf - \Pi(\Deltabf)}\right\},
\end{equation}
where $\Pi(\Deltabf)$ is the projection operator onto set $\Bcal:=\{\Deltabf\in\reals^{d\times T}: \rm{Row}(\Deltabf)\perp {\bf U}^{\overline{r}}, \rm{Col}(\Deltabf)\perp {\bf V}^{\overline{r}}\}$.
\end{definition}
We are now ready to present one of the main technical results of our paper. It allows to relax the invertibility assumption of the covariance matrix $\Sigmabf$ in \citep{tripuraneni2021provable,boursier2022trace}. Indeed, as it is proved in Lemma \ref{lem:invertibility} 
in the appendix, we only need $\Sigmabf$ to satisfy the RSC condition. Since the proof is quite technical we leave it in the appendix and write only the statement here.
\begin{lemma}
Let Assumptions \ref{Ass:BoundedNorms} and \ref{Ass:ArmsDistribution} be satisfied. Let $\Sigmabf$ satisfies the RSC($\bar{r}$) condition for some integer $\bar{r}\geq 1$ with constant $\kappa(\Sigmabf)$. Let $\overline{\Bbf} \in \reals^{d\times \bar{r}}$ admits orthonormal column vectors. Then, there exists a numerical constant $C>0$ such that, for any $\delta\in(0,1)$, we have with probability at least $1-\delta$, for any 
$n\geq C \bar{r}\log(2KN\delta^{-1})$
$$
\lambda_{\bar{r}}(\overline{\Bbf}^\top\widehat{\Sigmabf}_{n}\overline{\Bbf})>\frac{\kappa(\Sigmabf)}{4\nu \omega_{\Xcal}}>0.
$$
\end{lemma}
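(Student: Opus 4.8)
The plan is to lower bound the smallest eigenvalue of the projected empirical covariance
$\overline{\Bbf}^\top\widehat{\Sigmabf}_{n}\overline{\Bbf}=\frac1n\sum_{s=1}^n(\overline{\Bbf}^\top\x_s)(\overline{\Bbf}^\top\x_s)^\top$
by comparing it, through a martingale concentration argument, to its predictable counterpart
$\frac1n\sum_{s=1}^n\overline{\Bbf}^\top\E[\x_s\x_s^\top\mid\Fcal_{s-1}]\overline{\Bbf}$, and then controlling the latter from below using the RSC condition together with Assumption~\ref{Ass:ArmsDistribution}. Concretely, I would fix a unit vector $u\in\reals^{\bar r}$ and study the scalar adapted process $\langle\overline{\Bbf}u,\x_s\rangle^2$; since the decision sets are drawn i.i.d. from $p$ and the greedy direction is $\Fcal_{s-1}$-measurable, the played arm $\x_s$ is the maximizer of a fixed linear functional over i.i.d. arms, which is exactly the situation Assumption~\ref{Ass:ArmsDistribution} is designed for.

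The first key step is a population lower bound via RSC. Taking $\Deltabf=\overline{\Bbf}u\,\e_1^\top\in\reals^{d\times T}$ (rank one, with column space inside $\Im(\overline{\Bbf})$, which we take aligned with the top-$\bar r$ singular subspace of the reference matrix), one gets $\Pi(\Deltabf)=\bzero$ and hence $\Deltabf\in\Ccal(\bar r)$; since only the first column of $\Deltabf$ is nonzero, the $\Sigmabar$-quadratic form of Definition~\ref{Def:RSC} reduces to $(\overline{\Bbf}u)^\top\Sigmabf(\overline{\Bbf}u)$, so the RSC($\bar r$) condition gives $(\overline{\Bbf}u)^\top\Sigmabf(\overline{\Bbf}u)\geq 2\kappa(\Sigmabf)\norm{u}^2$ and therefore $\lambda_{\bar r}(\overline{\Bbf}^\top\Sigmabf\overline{\Bbf})\geq 2\kappa(\Sigmabf)$. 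This is essentially the content of the referenced Lemma~\ref{lem:invertibility}. The second key step transfers this to the conditional second moment of the played arm: by Assumption~\ref{Ass:ArmsDistribution}, the skewness bound $\nu$ and the middle-arm domination $\omega_{\Xcal}$ yield the exploration-free diversity estimate $\E[\x_s\x_s^\top\mid\Fcal_{s-1}]\succeq\tfrac{1}{\nu\omega_{\Xcal}}\Sigmabf$ (as in \citep{cella2022multi}), whence $\overline{\Bbf}^\top\E[\x_s\x_s^\top\mid\Fcal_{s-1}]\overline{\Bbf}\succeq\tfrac{2\kappa(\Sigmabf)}{\nu\omega_{\Xcal}}\I_{\bar r}$ uniformly in $s$.

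The third step is the concentration of the empirical matrix around its predictable mean. I would place a $\tfrac14$-net $\mathcal N$ of the unit sphere of $\reals^{\bar r}$ (of cardinality at most $9^{\bar r}$) and, for each $u\in\mathcal N$, apply a scalar martingale Bernstein/Freedman inequality to the increments $\langle\overline{\Bbf}u,\x_s\rangle^2-\E[\langle\overline{\Bbf}u,\x_s\rangle^2\mid\Fcal_{s-1}]$. Since $\overline{\Bbf}^\top\x_s$ is subGaussian with Orlicz norm controlled by $C_{\mathbf z}\max_k\opnorm{\Sigma_k}^{1/2}$ (Assumption~\ref{Ass:BoundedNorms}), the squared increments are subexponential; a truncation at scale $\sqrt{\bar r\log(KN\delta^{-1})}$ handles the few large terms, and the union bound over $\mathcal N$ contributes $\log\lvert\mathcal N\rvert\asymp\bar r$, which is precisely why the threshold $n\geq C\bar r\log(2KN\delta^{-1})$ appears. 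For such $n$ the fluctuation is kept to a constant fraction of the predictable mean, so that after the net-to-sphere comparison one obtains $\lambda_{\bar r}(\overline{\Bbf}^\top\widehat{\Sigmabf}_{n}\overline{\Bbf})>\tfrac{\kappa(\Sigmabf)}{4\nu\omega_{\Xcal}}$ with probability at least $1-\delta$.

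The main obstacle is the interaction between adaptivity and minimum-eigenvalue control: because the played arms are selected by the data-dependent greedy rule, the summands are neither independent nor bounded, so an i.i.d. matrix Chernoff bound cannot be invoked directly. The crux is therefore to combine the conditional diversity estimate from Assumption~\ref{Ass:ArmsDistribution} (which keeps the predictable covariance non-degenerate despite greedy selection) with a martingale inequality that tolerates unbounded, heavy-tailed squared-projection increments; calibrating the truncation level against the net resolution so that both cost only $\bar r$ up to logarithmic factors is what produces the clean $\bar r\log(2KN\delta^{-1})$ sample threshold.
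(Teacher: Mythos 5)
Your proposal is correct and takes essentially the same route as the paper's proof: lower-bound the predictable conditional covariance $\Sigmabf_n=\frac{1}{n}\sum_{s=1}^n\E[\x_s\x_s^\top\mid\Fcal_{s-1}]$ via the RSC condition combined with Assumption \ref{Ass:ArmsDistribution} (the paper gets $\Sigmabf_n\succeq(2\nu\omega_\Xcal)^{-1}\Sigmabf$ from Lemma 10 of \cite{oh2020sparsity}), control $\opnorm{\overline{\Bbf}^\top(\widehat{\Sigmabf}_n-\Sigmabf_n)\overline{\Bbf}}$ by martingale concentration with truncation and union bounds over arms and rounds, and conclude by a Weyl-type eigenvalue perturbation. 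The only difference is cosmetic: the paper invokes the matrix-martingale Freedman bound of \cite{cella2022multi} (Lemma 2 there, based on \cite{Oliveira2010ConcentrationOT}) as a black box, whereas you re-derive the same concentration via an $\epsilon$-net and scalar martingale Bernstein, and you make explicit (via the rank-one test matrices $\overline{\Bbf}u\,\e_1^\top$ lying in the cone $\Ccal(\bar r)$) the step the paper states as immediate.
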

The proof of this result uses the Freedman’s inequality for matrix-martingale \citep{Oliveira2010ConcentrationOT} combined with a truncation argument on the norm of the arms. 




\subsection{Controlling the Estimation Error}
In this section we will discuss how to derive the estimation error bound \eqref{Eq:OracleBound} for $\Bbar_T$ (and $\Bhat_T$). The considered approach follows a bias-variance analysis of the estimation error $\Bbar_T\alphahat_{T+1,n} - \B\alphabf_{T+1}$. The bias part is treated through a novel perturbation argument which is valid even in the high-dimensional setting $d>n$ as it does not requires the invertibility of $\Sigmabf$ contrarily to the prior works \cite{tripuraneni2021provable,boursier2022trace} in the i.i.d. setting. We will remove the subscript $T+1$ when not necessary.


\begin{theorem}[Bias Term]\label{Thm:NewBias1}
Considering a sequence of random covariates $\left(\x_s\right)_{s=1}^n\in\reals^d$ satisfying Assumptions \ref{Ass:BoundedNorms}, \ref{Ass:ArmsDistribution} and \ref{Ass:subGaussnoise} where for any given policy $\pi$, $\x_s$ is $\Fcal_{s-1}$-measurable. For any $\delta\in (0,1)$, let
\begin{align}
\label{eq:N0cond1}
    N_0(x):= C \left(  r\bigvee \log(N\delta^{-1}) \right),
\end{align}
for some large enough numerical constant $C>0$ that do not depend on $n,N,r,d,\delta$. 
Let the RSC($r$) condition be satisfied by $\Sigmabf$, then the matrix $\Bbar$ defined in \eqref{eq:Bdecompose} satisfies, with probability at least $1-\delta$, for any $N_0(\delta)\leq n \leq N$,
\begin{align}
\norm{\Bbar\alphahat_n - \Bbf\alphabf} &\leq C \left( \norm{\alphabf}\frac{\opnorm{\What_T - \W}}{\sigma_{r}\left(\W\right)}  + \left( \frac{\log(4N r\delta^{-1})}{n}  \bigvee \sqrt{ \frac{\log(4N r\delta^{-1})}{n} } \right)\sqrt{r} \right),
\end{align}
for some numerical constant $C>0$ that does not depend on $n,N,r,d,\delta$.

\end{theorem}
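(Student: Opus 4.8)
The plan is to perform a bias–variance decomposition of $\Bbar\alphahat_n-\Bbf\alphabf$ after conditioning on the training datasets $Z_1,\dots,Z_T$. Since $\Bbar$ (from the rank-$r$ decomposition \eqref{eq:Bdecompose}) is built only from the completed tasks, which are independent of the current task's covariates $(\x_s)_{s\le n}$ and noise $(\eta_s)_{s\le n}$, I would treat $\Bbar$ and the orthogonal residual $\w_\perp:=(\I-\Bbar\Bbar^\top)\Bbf\alphabf$ as deterministic throughout, restoring the probability over the training data by a final union bound. Here $\alphahat_n$ is the $r$-dimensional least-squares estimator \eqref{Eq:alphahat} formed with $\Bbar$ in place of $\Bhat_T$. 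Writing $\Bbf\alphabf=\Bbar\Bbar^\top\Bbf\alphabf+\w_\perp$ and using $\Bbar^\top\Bbar=\I_r$, the target splits as
\begin{equation*}
\Bbar\alphahat_n-\Bbf\alphabf=\Bbar\bigl(\alphahat_n-\Bbar^\top\Bbf\alphabf\bigr)-\w_\perp,\qquad \norm{\Bbar\alphahat_n-\Bbf\alphabf}\le \norm{\alphahat_n-\Bbar^\top\Bbf\alphabf}+\norm{\w_\perp}.
\end{equation*}

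The first piece is the subspace misalignment $\norm{\w_\perp}$. Because $\Bbar$ spans the top-$r$ left singular subspace of $\What_T$ while $\Bbf$ spans the (full, rank-$r$) column space of $\W=\Bbf\A_T$, I would invoke a Wedin/Davis--Kahan $\sin\Theta$ bound with gap $\sigma_r(\W)-\sigma_{r+1}(\W)=\sigma_r(\W)$, giving $\norm{(\I-\Bbar\Bbar^\top)\Bbf}_{\rm op}\lesssim \opnorm{\What_T-\W}/\sigma_r(\W)$, hence $\norm{\w_\perp}\le \norm{(\I-\Bbar\Bbar^\top)\Bbf}_{\rm op}\norm{\alphabf}\lesssim \norm{\alphabf}\,\opnorm{\What_T-\W}/\sigma_r(\W)$. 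This reproduces the first term of the claimed bound (it is non-vacuous when the right-hand side is $\lesssim 1$, and trivially true otherwise since $\norm{(\I-\Bbar\Bbar^\top)\Bbf}_{\rm op}\le 1$).

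For the estimation error I would use the normal equations. Setting $\tilde\x_s=\Bbar^\top\x_s$, $\beta^*=\Bbar^\top\Bbf\alphabf$ and noting $y_s=\tilde\x_s^\top\beta^*+\x_s^\top\w_\perp+\eta_s$,
\begin{equation*}
\alphahat_n-\beta^*=\bigl(\Bbar^\top\Sigmahat_n\Bbar\bigr)^{-1}\frac1n\sum_{s=1}^n\tilde\x_s\bigl(\eta_s+\x_s^\top\w_\perp\bigr).
\end{equation*}
The inverse is exactly where the invertibility Lemma \ref{lem:invertibility} and the constant $N_0$ enter: under RSC($r$) and for $n\ge N_0\asymp C(r\vee\log(N\delta^{-1}))$, we have $\lambda_{\min}(\Bbar^\top\Sigmahat_n\Bbar)\ge \kappa(\Sigmabf)/(4\nu\omega_\Xcal)>0$ on a high-probability event. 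The misspecification contribution $(\Bbar^\top\Sigmahat_n\Bbar)^{-1}\Bbar^\top\Sigmahat_n\w_\perp$ I would bound by $\lambda_{\min}^{-1}\norm{\Bbar^\top\Sigmahat_n\Bbar_\perp}_{\rm op}\norm{\w_\perp}\lesssim \norm{\w_\perp}$, after a matrix concentration bound giving $\norm{\Bbar^\top\Sigmahat_n\Bbar_\perp}_{\rm op}=O(1)$; this folds back into the first term.

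The main obstacle is the noise term $\frac1n\sum_s\tilde\x_s\eta_s$. Conditionally on the training data, $\tilde\x_s$ is $\Fcal_{s-1}$-measurable and $\E[\eta_s\mid\Fcal_{s-1}]=0$, so this is an $r$-dimensional martingale with increments that are only subexponential (products of subGaussian covariates and subGaussian noise). I would therefore apply a vector/matrix Freedman (Bernstein-type) inequality, as in the invertibility Lemma, to obtain the two-regime deviation $\norm{\frac1n\sum_s\tilde\x_s\eta_s}\lesssim \sqrt{r}\bigl(\sqrt{\log(4Nr\delta^{-1})/n}\vee \log(4Nr\delta^{-1})/n\bigr)$, where the $\sqrt r$ reflects the dimension of $\tilde\x_s$ and the logarithmic term is handled by truncating on $\max_{s\le n}\norm{\tilde\x_s}$ through Lemma \ref{lem:norm-arms}. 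Dividing by $\lambda_{\min}$ yields the second term. A union bound over the invertibility event, the norm-of-arms event, and the martingale event, followed by a relabelling of $C$, completes the proof. I expect the delicate points to be the self-normalized martingale control with adaptively chosen arms and checking that the truncation leaves the stated $\log/n$ and $\sqrt{\log/n}$ regimes intact.
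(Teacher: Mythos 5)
Your overall architecture matches the paper's proof closely: you freeze $\Bbar$ by independence from the downstream task, invoke Lemma \ref{lem:invertibility} to lower bound $\lambda_{r}(\Bbar^\top\Sigmahat_n\Bbar)$ for $n\geq N_0$, split the error into a subspace-misalignment piece (controlled by a Wedin/Davis--Kahan bound with gap $\sigma_r(\W)$, which is exactly the perturbation bound the paper imports from the literature), a misspecification piece passed through the inverse Gram matrix, and a martingale Bernstein noise piece with a union bound over the $r$ coordinates, yielding the $\sqrt{r}\left(\sqrt{\log(\cdot)/n}\vee\log(\cdot)/n\right)$ term. The paper's decomposition via padded projections $(\Phat-\Pbf)\overline{\alphabf}$ is equivalent bookkeeping to your $\w_\perp=(\I-\Bbar\Bbar^\top)\Bbf\alphabf$.

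However, one step fails as written, and it fails precisely in the regime the theorem targets. You bound the misspecification term by $\lambda_{\min}^{-1}\opnorm{\Bbar^\top\Sigmahat_n\Bbar_\perp}\norm{\w_\perp}$ and assert $\opnorm{\Bbar^\top\Sigmahat_n\Bbar_\perp}=O(1)$ via ``a matrix concentration bound.'' With $\Bbar_\perp$ the full $(d-r)$-dimensional orthogonal complement, $\Bbar^\top\Sigmahat_n\Bbar_\perp$ is an $r\times(d-r)$ submatrix of the empirical covariance, and its deviation from the population counterpart scales like $\sqrt{d/n}\vee(d/n)$; no concentration argument can make this $O(1)$ when $n\ll d$, yet the theorem is claimed for all $n\geq N_0\asymp r\vee\log(N\delta^{-1})$, which can be far smaller than $d$ (the paper stresses that its analysis covers exactly this high-dimensional regime). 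The fix is the paper's key observation: $\w_\perp$ lies in $\Im(\Bbar)+\Im(\Bbf)$, a subspace of dimension at most $2r$, so one replaces $\Bbar_\perp$ by a $d\times\bar r$ matrix $\overline{\mathbf{M}}$ with orthonormal columns spanning that subspace ($\bar r\leq 2r$) and applies the low-rank martingale concentration of Lemma \ref{Le:QuadraticForm}, whose deviation scales as $\sqrt{r/n}$ rather than $\sqrt{d/n}$, giving $\opnorm{\Bbar^\top\Sigmahat_n\overline{\mathbf{M}}}\lesssim \max_{k}\opnorm{\Sigma_k}$ for $n\geq N_0$. With that substitution your argument goes through and coincides with the paper's.
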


We next observe that a similar result can be obtained for matrix $\Bhat$. 
\begin{theorem}[Bias Term]\label{Th:NewBias2}
Let the RSC($cr$) condition be satisfied by $\Sigmabf$ for some large enough absolute constant $c\geq 1$. Recall that $N_0(\delta)$ is defined in \eqref{eq:N0cond1}. Then the matrix $\Bhat$ defined in \eqref{eq:Bhatdecompose} satisfies, with probability at $1-\delta$, for any $N_0(\delta)\leq n \leq N$,
\begin{align*}
\norm{\Bhat\alphahat_n - \Bbf\alphabf} &\leq C
\left( \norm{\alphabf}\frac{\opnorm{\What_T - \W}}{\sigma_{r}\left(\W\right)}  + \left( \frac{\log(4N r\delta^{-1})}{n}  \bigvee \sqrt{ \frac{\log(4N r\delta^{-1})}{n} } \right)\sqrt{r} \right),
\end{align*}
for some numerical constant $C>0$ that does not depend on $n,N,r,d,\delta$.
\end{theorem}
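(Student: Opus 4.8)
The plan is to reduce the statement for $\Bhat$ to the already-proved Theorem \ref{Thm:NewBias1} for $\Bbar$, the single new ingredient being a control on the rank $\rhat=\rho(\What_T)$. First I would split the error exactly as in the proof of Theorem \ref{Thm:NewBias1},
$$\Bhat\alphahat_n-\Bbf\alphabf=\big(\Bhat\alphahat_n-\Bhat\Bhat^\top\Bbf\alphabf\big)+\big(\Bhat\Bhat^\top-\I\big)\Bbf\alphabf,$$
identifying the second summand as the \emph{bias} (the part of $\w=\Bbf\alphabf$ lying outside $\Im(\Bhat)=\Im(\What_T)$) and the first as the \emph{variance} of the greedy least-squares fit \eqref{Eq:alphahat}, now carried out in the $\rhat$-dimensional coordinate system $\Bhat^\top\x_s$.

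For the bias, the key remark is that $\Im(\Bhat)$ contains the top-$r$ left singular subspace $\Im(\Bbar)$ as soon as $\rhat\ge r$; since $\Bhat\Bhat^\top$ and $\Bbar\Bbar^\top$ are orthogonal projections onto nested subspaces, we have $\Bbar\Bbar^\top\preceq\Bhat\Bhat^\top$, whence
$$\norm{(\I-\Bhat\Bhat^\top)\Bbf\alphabf}\le\norm{(\I-\Bbar\Bbar^\top)\Bbf\alphabf}.$$
This inequality lets me import verbatim the perturbation (Davis--Kahan type) bound already established for $\Bbar$ in Theorem \ref{Thm:NewBias1}, yielding a bias of order $\norm{\alphabf}\,\opnorm{\What_T-\W}/\sigma_{r}(\W)$. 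In the degenerate regime $\rhat<r$ the perturbation argument applies directly to $\What_T$, whose rank is then $\rhat$ and whose distance to $\W$ is still governed by $\opnorm{\What_T-\W}$, so the same bound persists.

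The variance term is where the rank genuinely enters: a direct bound scales as $\sqrt{\rhat}$, so I must first control $\rhat$. I would invoke the rank-control estimate of Appendix \ref{sec:rankcontrol}, which shows that under the RSC condition and the prescribed $\lambda_{\Nold}$ the trace-norm minimizer \eqref{Eq:What} obeys $\rhat\le c\,r$ with high probability, for the absolute constant $c$ appearing in the hypothesis RSC($cr$). With $\rhat\le cr$, RSC($cr$) implies the RSC condition at level $\rhat$, so Lemma \ref{lem:invertibility} applied to $\Bhat$ (taking $\bar r=\rhat$) gives $\lambda_{\rhat}(\Bhat^\top\Sigmahat_n\Bhat)>\kappa(\Sigmabf)/(4\nu\omega_{\Xcal})$ for every $n\ge N_0(\delta)$. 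This uniform lower bound on the transformed empirical Gram matrix feeds the same martingale concentration argument as in Theorem \ref{Thm:NewBias1} --- now run on the $\Fcal_{s-1}$-measurable covariates $\Bhat^\top\x_s$ --- to give $\norm{\Bhat\alphahat_n-\Bhat\Bhat^\top\w}\lesssim\big(\tfrac{\log(4Nr\delta^{-1})}{n}\bigvee\sqrt{\tfrac{\log(4Nr\delta^{-1})}{n}}\big)\sqrt{\rhat}$, and substituting $\rhat\le cr$ absorbs $c$ into the numerical constant, producing the claimed $\sqrt{r}$ rate.

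The main obstacle I anticipate is the rank-control step. Because the covariates are adapted to the filtration rather than i.i.d., the usual argument bounding $\rho(\What_T)$ through the subgradient optimality conditions of the trace-norm program must be coupled with the martingale and RSC tools underlying Lemma \ref{lem:invertibility}, rather than used off the shelf. Once $\rhat\le cr$ is secured, the bias comparison and the reuse of the Theorem \ref{Thm:NewBias1} variance analysis are essentially bookkeeping, and a union bound over the two high-probability events delivers the stated bound.
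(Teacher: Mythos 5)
Your high-level architecture --- rank control of $\What_T$ from Appendix \ref{sec:rankcontrol}, invertibility of $\Bhat^\top\widehat{\Sigmabf}_n\Bhat$ from Lemma \ref{lem:invertibility}, a Davis--Kahan perturbation bound for the subspace error, and a martingale bound for the noise --- is the same as the paper's, and your nested-projection observation $\Bbar\Bbar^\top \preceq \Bhat\Bhat^\top$ (valid once $\rhat\geq r$) is a correct and clean way to reduce the pure projection bias $\norm{(\I-\Bhat\Bhat^\top)\Bbf\alphabf}$ to the rank-$r$ perturbation bound. However, there is a genuine gap in the term you call the variance.

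The quantity $\Bhat\alphahat_n-\Bhat\Bhat^\top\w$, with $\w=\Bbf\alphabf$, is \emph{not} a pure noise term. Substituting $\Y_n=\X_n^\top\w+\etab_n$ into the closed form $\alphahat_n=\bigl(\Bhat^\top\X_n\X_n^\top\Bhat\bigr)^{-1}\Bhat^\top\X_n\Y_n$ and splitting $\w=\Bhat\Bhat^\top\w+(\I-\Bhat\Bhat^\top)\w$ gives
\[
\Bhat\alphahat_n-\Bhat\Bhat^\top\w
=\Bhat\bigl(\Bhat^\top\X_n\X_n^\top\Bhat\bigr)^{-1}\Bhat^\top\X_n\X_n^\top(\I-\Bhat\Bhat^\top)\w
\;+\;\Bhat\bigl(\Bhat^\top\X_n\X_n^\top\Bhat\bigr)^{-1}\Bhat^\top\X_n\etab_n .
\]
Only the second summand is a martingale obeying the rate $\sqrt{\rhat}\,\bigl(\tfrac{\log(\cdot)}{n}\vee\sqrt{\tfrac{\log(\cdot)}{n}}\bigr)$ that you claim for the whole expression. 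The first summand --- the component of $\w$ orthogonal to $\Im(\Bhat)$ folded back through the empirical Gram matrix --- does not vanish and does not decay in $n$; it is of the same order as the bias itself. The paper's proof makes this term explicit (it is the middle term of the three-term decomposition \eqref{eq:intermbiasterm1}) and bounds it in \eqref{eq:intermbiasterm2}--\eqref{eq:intermbiasterm3} by a constant multiple of $\frac{\nu\omega_{\Xcal}}{\kappa(\Sigmabf)}\opnorm{\Bhat^\top\widehat{\Sigmabf}_n\overline{\Pbf}}\,\norm{(\Phat-\Pbf)\overline{\alphabf}}$, where controlling $\opnorm{\Bhat^\top\widehat{\Sigmabf}_n\overline{\Pbf}}$ by an absolute constant requires two ingredients your outline never invokes: the martingale quadratic-form concentration of Lemma \ref{Le:QuadraticForm} and the deterministic bound of Lemma \ref{lem:controlsto}. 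As written, your proposal asserts a false intermediate inequality; the final bound survives only because this hidden term is in fact bounded by a constant times the same quantity $\norm{\alphabf}\,\opnorm{\What_T-\W}/\sigma_r(\W)$, but establishing that is precisely the portion of the paper's argument your proposal skips.
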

\paragraph{Result Discussion.} The first approach to choose an approximate representation in \eqref{eq:Bhatdecompose} not only avoids the invertibility assumption of $\Sigmabf$ but it also bypasses the closest rank $r$ approximation step in \eqref{Eq:Wbar},  thereby allowing for the design of a complete rank agnostic strategy. On the other hand the incurred constants may be quite large, so we also present the second strategy in \eqref{eq:Bdecompose}, which does require knowledge of the rank but involves smaller constants. We also point out that the perturbation analysis used to prove Theorem \ref{Thm:NewBias1} and \ref{Th:NewBias2} remains valid in the i.i.d. setting.

Then we can use the following bound on $\norm{\What_T - \W}_{F}$ given in 
\cite[Lemma 1 and Prop.~ 1]{cella2022multi}. Define for any $\delta>0$
$$
\ntrain_0(\delta): = C  \left((d+T) \bigvee \log(dTKN\delta^{-1})\right) \log(dTKN\delta^{-1}),
$$
for some numerical constant $C>0$ that does not depend on $n,N,\ntrain,r,d,K,\delta$.


We have, provided that $\ntrain\geq \ntrain_0(\delta)$, with probability at least $1-\delta$, 
\begin{align}
\label{eq:intermediaire-WhatW}
\norm{ \What_{T} - \W}_{\rm F}  =  \mathcal{O}\left( \sqrt{\frac{r}{\ntrain}} \left(  \sqrt{(T+d)} \bigvee  \sqrt{\log(\delta^{-1})}\right)\right).
\end{align}
The previous display combined with Theorems \ref{Thm:NewBias1} and \ref{Th:NewBias2} gives (up to a rescaling of the constants), for any $\delta\in (0,1)$, with probability at least $1-\delta$
\begin{align}
\label{eq:2ndterm}
    \norm{\Bbar\alphahat_n - \Bbf\alphabf} &\leq C_1\left[  \sqrt{\frac{r}{\ntrain}} \left(  \sqrt{(T+d)} \bigvee  \sqrt{\log(\delta^{-1})}\right) \bigvee \sqrt{\frac{r\log(Nr\delta^{-1})}{n}}  \right],\\
   \norm{\Bhat\alphahat_n - \Bbf\alphabf} &\leq C_2\left[  \sqrt{\frac{r}{\ntrain}} \left(  \sqrt{(T+d)} \bigvee  \sqrt{\log(\delta^{-1})}\right) \bigvee \sqrt{\frac{r\log(Nr\delta^{-1})}{n}}  \right] ,
\end{align}
with numerical constants $C_1,C_2>0$ that does not depend on $n,N,\ntrain,r,d,K,\delta$.


\subsection{Meta-Learning Linear Contextual Bandits}

\noindent We are now ready to combine the results obtained in these last two subsections. This allows us to control the meta-regret (\ref{Eq:LTLRegret}) using either $\Bbar_T$ or $\Bhat_T$ as an approximation for $\B$ in Algorithm \ref{Alg:Greedy}.

\begin{corollary}
\label{cor:main}
Let Assumptions \ref{Ass:BoundedNorms}, \ref{Ass:ArmsDistribution}, \ref{Ass:subGaussnoise} be satisfied. For any $\delta_1,\delta_2\in (0,1)$, let us assume that $\ntrain \geq \ntrain_0(\delta_1)$ with  
$$
\ntrain_{0}(\delta_1):= C  \left((d+T) \bigvee \log(dTK\ntrain\delta_1^{-1})\right) \log(dTK\ntrain\delta_1^{-1}),
$$
and
$$
\label{eq:N0cond2}
	N \geq N_0(\delta_2):= C \left(  r\bigvee \log(KN\delta_2^{-1})\right),
$$
for some large enough numerical constant $C>0$ that does not depend on $n,N,\ntrain,r,d,K,\delta_1,\delta_2$.

If $N\leq \ntrain$, then with probability at least $1-\delta_1-\delta_2$ , 
\begin{equation}
    R(T,N) \leq \bigO\left(r\sqrt{N} \left(1 \vee \sqrt{\frac{d}{T}} \vee \sqrt{\frac{\log\left(2\ntrain/\delta_1\right)}{T}} \bigvee \log(KN/\delta_2) \right)\right).
\end{equation}
\end{corollary}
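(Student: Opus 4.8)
The plan is to combine the regret bound of Theorem \ref{Th:RegretUBound} with the estimation error bounds in \eqref{eq:2ndterm}, and then carefully sum the per-round estimation error over the horizon. First I would invoke Theorem \ref{Th:RegretUBound}, which gives, with probability at least $1-\delta_2$,
\[
R(T,N) \leq \bigO\left( \sqrt{\tilde r}\bigvee\sqrt{\log(KN\delta_2^{-1})}\right)\left(N_0 + \sum_{n=N_0+1}^N \gamma(T,n)\right).
\]
Since $N \geq N_0(\delta_2) = C(r \vee \log(KN\delta_2^{-1}))$ is assumed and $\ntrain \geq \ntrain_0(\delta_1)$, the rank-control argument (Appendix \ref{sec:rankcontrol}) lets me replace $\tilde r$ by $r$ in the prefactor, so that $\sqrt{\tilde r}\bigvee\sqrt{\log(KN\delta_2^{-1})} = \bigO(\sqrt{r}\bigvee\sqrt{\log(KN\delta_2^{-1})})$. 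The role of $\gamma(T,n)$ is played by the right-hand side of \eqref{eq:2ndterm}, i.e. with probability at least $1-\delta_1$,
\[
\gamma(T,n) = \bigO\left(\sqrt{\tfrac{r}{\ntrain}}\left(\sqrt{T+d}\bigvee\sqrt{\log(\delta_1^{-1})}\right)\bigvee\sqrt{\tfrac{r\log(Nr\delta_1^{-1})}{n}}\right),
\]
valid uniformly over $N_0(\delta_2)\leq n\leq N$. A union bound over the two events gives probability at least $1-\delta_1-\delta_2$.

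Next I would sum $\gamma(T,n)$ over $n$. The first term in the maximum does not depend on $n$, so it contributes at most $N\sqrt{r/\ntrain}\,(\sqrt{T+d}\vee\sqrt{\log\delta_1^{-1}})$ to the sum. The second term is a tail sum of $\sqrt{r\log(Nr\delta_1^{-1})/n}$, which I would control by the standard integral comparison $\sum_{n=1}^N n^{-1/2}\leq 2\sqrt{N}$, yielding a contribution of order $\sqrt{rN\log(Nr\delta_1^{-1})}$. Using $\max\{a,b\}\leq a+b$ to split the max, the full inner factor becomes
\[
N_0 + \sum_{n=N_0+1}^N\gamma(T,n) = \bigO\left(r + N\sqrt{\tfrac{r}{\ntrain}}\Big(\sqrt{T+d}\bigvee\sqrt{\log\delta_1^{-1}}\Big) + \sqrt{rN\log(Nr\delta_1^{-1})}\right),
\]
where I also bounded $N_0(\delta_2) = \bigO(r\vee\log(KN\delta_2^{-1}))$.

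Multiplying by the prefactor $\bigO(\sqrt{r}\vee\sqrt{\log(KN\delta_2^{-1})})$ and simplifying is where the bookkeeping becomes delicate. The leading behaviour comes from the middle term: $\sqrt{r}\cdot N\sqrt{r/\ntrain}\,\sqrt{T+d} = rN\sqrt{(T+d)/\ntrain}$. Under the hypothesis $N\leq\ntrain$ this is at most $r\sqrt{N}\sqrt{(T+d)} = r\sqrt{N}\sqrt{T}\sqrt{1+d/T}$, and I would absorb the $\sqrt{T}$ against the implicit normalization so that it reads $r\sqrt{N}(1\vee\sqrt{d/T})$ — this matching is the crux and requires tracking that $\ntrain\geq N$ converts the linear-in-$N$ term into $\sqrt{N}$. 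The third term contributes $\sqrt{r}\cdot\sqrt{rN\log(Nr\delta_1^{-1})} = r\sqrt{N}\sqrt{\log(Nr\delta_1^{-1})}$, which folds into the $\log(KN/\delta_2)$ and $\sqrt{\log(2\ntrain/\delta_1)/T}$ factors in the stated maximum, while the $N_0$ term is lower order. The \textbf{main obstacle} I anticipate is the careful algebra of reassembling these three products into the single clean maximum $r\sqrt{N}(1\vee\sqrt{d/T}\vee\sqrt{\log(2\ntrain/\delta_1)/T}\vee\log(KN/\delta_2))$: one must verify that every term is dominated by one of the four listed quantities, that the $N\leq\ntrain$ condition is genuinely used to tame the otherwise-linear variance term, and that no hidden dependence on $d$ survives when $T>d$.
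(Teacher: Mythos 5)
There is a genuine gap, and it sits exactly at the point you yourself flag as ``the crux.'' Starting from \eqref{eq:2ndterm} with its constants treated as numerical, your leading term is $\sqrt{r}\cdot N\sqrt{r/\ntrain}\,\sqrt{T+d}=rN\sqrt{(T+d)/\ntrain}$, and the hypothesis $N\leq\ntrain$ only reduces this to $r\sqrt{N}\sqrt{T+d}$ --- which exceeds the claimed bound $r\sqrt{N}\left(1\vee\sqrt{d/T}\right)$ by a factor of $\sqrt{T}$. ``Absorbing the $\sqrt{T}$ against the implicit normalization'' is not a proof step: once you have committed to \eqref{eq:2ndterm}, in which the factor $\norm{\alphabf}/\sigma_{r}(\W)$ has been swallowed into the constant $C_1$, there is no object left in your argument that can supply a $1/\sqrt{T}$. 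The missing idea is precisely that factor. The paper's proof of Corollary \ref{cor:main} does not use \eqref{eq:2ndterm} at face value; it returns to the form of Theorems \ref{Thm:NewBias1} and \ref{Th:NewBias2}, keeping the bias as $\norm{\alphabf}\,\opnorm{\What_T-\W}/\sigma_{r}(\W)$, plugs in \eqref{eq:intermediaire-WhatW_MTL}, and exploits the scaling of the $r$-th singular value of the $d\times T$ task matrix: writing $\nu=\lambda_r(\W\W^\top/T)$ (task diversity), one has $\sigma_r(\W)=\sqrt{T\nu}$, so that
\begin{equation*}
\frac{\opnorm{\What_T-\W}}{\sigma_r(\W)}\;\lesssim\;\sqrt{\frac{r}{\ntrain\,T\nu}}\left(\sqrt{T+d}\vee\sqrt{\log(2\ntrain/\delta_1)}\right)\;\lesssim\;\sqrt{\frac{r}{\ntrain\nu}}\left(1+\sqrt{\frac{d}{T}}+\sqrt{\frac{\log(2\ntrain/\delta_1)}{T}}\right).
\end{equation*}
It is this division by $\sqrt{T}$, coming from $\sigma_r(\W)$ and not from $N\leq \ntrain$, that converts $\sqrt{T+d}$ into $1\vee\sqrt{d/T}$; the condition $N\leq\ntrain$ is then used only for the separate, more routine conversion $N/\sqrt{\ntrain}\leq\sqrt{N}$, which you did carry out correctly.

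In partial mitigation, the paper's own display \eqref{eq:2ndterm} is misleading, since it presents $C_1,C_2$ as constants independent of the problem while they in fact hide $\norm{\alphabf}/\sigma_r(\W)$; but the corollary cannot be derived from \eqref{eq:2ndterm} as stated, and the paper's proof tacitly acknowledges this by reinstating $\norm{\alphabf}/\sigma_r(\W)$ in the summation of $\gamma(T,n)$. Your remaining steps --- the invocation of Theorem \ref{Th:RegretUBound}, the rank control replacing $\tilde r$ by $r$ via Appendix \ref{sec:rankcontrol}, the union bound over $\delta_1,\delta_2$, and the tail-sum estimate $\sum_{n\leq N}n^{-1/2}\leq 2\sqrt{N}$ for the variance term --- all match the paper's argument; only the $\sqrt{T}$ bookkeeping is unresolved, and it is the step on which the headline rate $r\sqrt{N}(1\vee\sqrt{d/T})$ entirely depends.
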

\paragraph{Result Discussion.} If we consider the standard $d$-dimensional linear contextual bandit setting, the regret minimax lower-bound is of order $d\sqrt{N}$ \cite{lattimore2020bandit}(see Th. 24.2 Sec. 24.2). On the other hand, a policy, which uses the true representation $\B$, essentially works in a $r$-dimensional bandit setting and thus will incur at best the minimax-optimal regret lower bound of $r\sqrt{N}$. For $T$ large enough ($\geq d$), our regret matches this minimax-optimal lower bound, hence we can state that our policy is optimal.
\begin{proof}
We can now combine the result of Theorem \ref{Th:RegretUBound} with the ones presented in the last section. 
According to \cite{cella2022multi} (See Equation (51) in the proof of their Theorem 1), if $\lceil 
\ntrain_0(\delta_1)\rceil \leq \Nold$, then, with probability at least $1-\delta_1$, 
\begin{equation}
\label{eq:intermediaire-WhatW_MTL}
\norm{ \What_{T,\Nold} - \W_T}_{\rm F} \leq \bigO \left( \sqrt{\frac{r}{\Nold}} \left(  \sqrt{(T+d)} \bigvee  \sqrt{\log(2\Nold\delta_1^{-1})}\right) \right).
\end{equation}

Consequently, the following holds with probability at least $1-\delta_1-\delta_2$
\begin{align*}
	\sum_{n=1}^N &\gamma(T, n) \leq \sum_{n=N_0}^N Cr \Bigg[ \frac{\norm{\alphabf}}{\sigma_{r}\left(\W\right)} \sqrt{\frac{1}{\Nold\nu}} \left(1 + \sqrt{\frac{d}{T }} + \sqrt{\frac{1}{T} \log\left(\frac{2\Nold}{\delta_1}\right)} \right) + \\ 
	&\quad + \left( \frac{\log(4N r) + \log(\frac{N}{\delta_2})}{n}  \bigvee \sqrt{ \frac{\log(4N r) + \log(\frac{\ntrain}{\delta_1})}{n} } \right) \Bigg]\\
	&\leq C'r \left[  \sqrt{\frac{N^2}{\ntrain}} \left(1 + \sqrt{\frac{d}{T }} + \sqrt{\frac{1}{T} \log\left(\frac{2\Nold}{\delta_1}\right)} \right) + \left[\log(4N r) + \log\left(\frac{N}{\delta_2}\right)\right] 2 \sqrt{N} \right].
\end{align*}
We define $N_0$ according to Lemma \ref{lem:invertibility} 
as the smallest $n\in[N]$ such that the invertibility of $\Bhat^\top\widehat{\Sigmabf}_{n}\Bhat$ or $\overline{\Bbf}^\top\widehat{\Sigmabf}_{n}\overline{\Bbf}$ is guaranteed w.h.p. Consequently the estimator in \eqref{Eq:alphahat} is uniquely defined and we can apply Theorem \ref{Thm:NewBias1} or \ref{Th:NewBias2}. Specifically, according to Lemma \ref{lem:invertibility}  we have
\[
        N_0(\delta_2)= 
        C\left(r\bigvee\log(KN\delta_2^{-1})\right),
\]
where $C>0$ can depend only on $C_{\mathbf{z}}$ and $\max_{1\leq k \leq K } \left\lbrace\opnorm{\Sigma_{k}}^{1/2}\right\rbrace$.
The statement follows by combining this result with Theorem \ref{Th:RegretUBound}.
\end{proof}

\section{Conclusions}
In this paper we have investigated the problem of meta-learning with contextual linear bandit tasks. We considered the  setting in which a low dimensional representation is shared among the tasks and the goal is to use the representation inferred from multiple past tasks to facilitate learning novel tasks in the future, which share the same representation. Sharp guarantees for this problem have been recently studied in the i.i.d. statistical setting \citep{tripuraneni2021provable,boursier2022trace}, but the bandit setting has received considerably less attention. Inspired by those works, we analysed the performance gain of a bandit policy which exploits an approximate representation, estimated by a previous multitask learning procedure, when learning a new downstream task.
Specifically we showed that a simple greedy policy which uses the estimated representation satisfies a regret bound on the test task that depends on the estimation error of the representation and matches under certain conditions the bound of an optimal strategy which has full knowledge of the true representation. Our result also apply as a special case to the i.i.d. setting, offering a number of key improvements.

\bibliographystyle{plainnat}
\bibliography{reference}

\appendix

\vspace{1.0truecm}
\begin{center}
{\huge \bf Appendix} 
\end{center}

\vspace{.3truecm}
\noindent This appendix is organized in the following manner.
\begin{itemize}
    \item  Appendix \ref{sec:rankcontrol} presents an upper bound on the 
    rank of the trace-norm regularized estimator $\What_T$ in term of the rank of $\W_T$. This result provides the founding argument for the new representation estimator $\Bhat_T$
    introduced in Section~\ref{Sec:ReprLearning}.
    \item Appendix \ref{AppSec:NewBias} contains the detailed proof of Theorem \ref{Th:NewBias2} concerning the estimation error of the estimator $\Bhat_T$ on the downstream task. The proof of Theorem \ref{Thm:NewBias1} for $\Bbar_T$ is almost identical and is actually simpler as we do not need to use Lemma \ref{Le:Rank} to control the rank of the representation $\Bbar_T$. 
    
\end{itemize}


\noindent We also include the following table,
summarizing the main notation used throughout the paper.

\begin{table}[h!]
    \centering
    \renewcommand{\arraystretch}{1.2}
    \begin{tabular}{| c | c |}
        \hline Symbol & Description \\
        \hline \hline
                $[n]$ & The set $\{1,\dots,n\}$, given a  positive integer $n$\\\hline
        $T$ & Number of tasks  \\
        \hline    
        $N$ & Time horizon associated to each single task\\
        \hline    
        $d$ & Dimension of context vectors\\
        \hline
        $\W = [\w_1, \dots, \w_T] \in \reals^{d\times T}$ & Matrix of $T$ regression tasks (we also use the notation $[\W]_t \equiv w_t,~t \in [T]$)\\
        \hline            
        $r$ & Rank of the task matrix $\W$\\
        \hline
        $K$ & Number of arms\\
        \hline
$p$ & Joint distribution on $\mathbb{R}^{dK}$ (from which $K$ arm vectors are sample) 
\\ \hline
$\Dcal_{t,n}$,~~$t \in [T]$,~$n\in [N]$ & Decision sets (each containing $K$ arm vectors) sampled i.i.d. from $p$\\
\hline
        $\x_{t,n}\in\Dcal_{t,n}$ & Arm vector chosen in task $t\in[T]$ at round $n\in[N]$\\
        \hline
        $\x^*_{t,n}\in\Dcal_{t,n}$ & Optimal arm vector in task $t\in[T]$ during round $n\in[N]$\\
        \hline
        $\Sigmabf\in \reals^{d\times d}$ & Theoretical covariance matrix - see eq.~\eqref{eq:S-111}\\
        \hline
        $\Sigmabf_{t,n}\in \reals^{d\times d}$ & Adapted covariance matrix for task $t$ at round $n$
        \\
        \hline 
        $\Sigmahat_{t,n}\in \reals^{d\times d}$ & Empirical covariance matrix for task $t$ at round $n$; see eq.~\eqref{eq:S-222}\\ \hline
        $\Sigmabar\in \reals^{dT\times dT}$ & $T$-block diagonal matrix ${\rm diag}(\Sigmabf,\dots,\Sigmabf)$ \\
        \hline
        $\Sigmabar_n \in \reals^{dT\times dT}$ & $T$-block diagonal matrix ${\rm diag}(\Sigmabf_{1,n},\dots,\Sigmabf_{T,n})$ \\
        \hline
        $ \Sigmabarhat_n \in \reals^{dT\times dT}$ & $T$-block diagonal matrix ${\rm diag}(\Sigmahat_{1,n},\dots,\Sigmahat_{T,n})$\\
        \hline
        $\norm{\x},~\norm{\x}_1,~\norm{\x}_{\infty}$ & Euclidean, $\ell_1$ and maximum norm associated to a vector $\x$\\
        \hline 
        $\lambdamin(\Abf), \lambdamax(\Abf)$ & Minimum and maximum eigenvalues of a square symmetric matrix $\Abf$\\
        \hline 
        $\sigma_{\min}(\Abf), \sigma_{\max}(\Abf)$ & Minimum and maximum singular values of matrix $\Abf$\\
        \hline
        $\nucnorm{\Abf}$ & Trace norm of matrix $\Abf$ (sum of its singular values)\\
        \hline
        $\norm{\Abf}_{\rm F}$ & Frobenius norm of matrix $\Abf$ ($\ell_2$ norm o matrix elements / singular values)\\
        \hline
        $\opnorm{\Abf}$ & Operator norm of matrix $\Abf$ (maximum singular value)\\
        \hline
        $\Im(\Abf)$ & The range of matrix $\Abf$\\
        \hline
    \end{tabular}
    \label{Tab:Notation}
\end{table}


\section{Controlling the Rank of $\What_T$}
\label{sec:rankcontrol}

Define the event
\begin{equation}\label{Eq:NoisyBound}
    \Omega_{\ntrain}= \left\{ \lambda_{\ntrain} \geq \frac{4}{\ntrain} \norm{\sum_{t=1}^T\sum_{n=1}^{\ntrain}\eta_{t,n} \x_{t,n} \otimes \e_t}\right\}.
    \end{equation}

\begin{lemma}[Controlling the Rank]\label{Le:Rank}
Let the RSC($r$) condition be satisfied by $\Sigmabf$. Then there exists an absolute constant $C>0$ such that any solution of (\ref{Eq:What}) satisfies on the event $\Omega_{\ntrain}$
$$
\rho(\What_T)\leq C\frac{r}{\kappa(\Sigmabarhat_{\ntrain})},
$$
where $\Sigmabarhat_{\ntrain}= {\rm diag}(\Sigmahat_{1,\ntrain},\dots,\Sigmahat_{T,\ntrain})\in \reals^{dT\times dT}$.
\end{lemma}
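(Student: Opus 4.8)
The plan is to read off the rank of $\What_T$ from the first-order optimality condition together with the subdifferential of the nuclear norm, and then to control the resulting quantity using the noise bound encoded in $\Omega_{\ntrain}$ and the RSC-based prediction error. Write $F(\W)=\frac{1}{\ntrain T}\sum_{t,n}(y_{t,n}-\inn{\x_{t,n}}{\w_t})^2$ for the smooth part of the objective and let $\What_T=\widehat{\mathbf U}\widehat{\mathbf D}\widehat{\mathbf V}^\top$ be a thin SVD, with $\rho:=\rho(\What_T)$, $\mathbf P_{\widehat{\mathbf U}}=\widehat{\mathbf U}\widehat{\mathbf U}^\top$ and $\mathbf P_{\widehat{\mathbf V}}=\widehat{\mathbf V}\widehat{\mathbf V}^\top$. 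Stationarity gives $\mathbf 0\in\nabla F(\What_T)+\lambda_{\ntrain}\,\partial\nucnorm{\What_T}$, and since every $\mathbf Z\in\partial\nucnorm{\What_T}$ satisfies $\mathbf P_{\widehat{\mathbf U}}\mathbf Z\mathbf P_{\widehat{\mathbf V}}=\widehat{\mathbf U}\widehat{\mathbf V}^\top$ with $\opnorm{\mathbf Z}\le 1$, projecting the optimality condition onto the range yields $\mathbf P_{\widehat{\mathbf U}}\nabla F(\What_T)\mathbf P_{\widehat{\mathbf V}}=-\lambda_{\ntrain}\widehat{\mathbf U}\widehat{\mathbf V}^\top$. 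Taking Frobenius norms and using $\norm{\widehat{\mathbf U}\widehat{\mathbf V}^\top}_{\rm F}^2=\tr(\widehat{\mathbf V}\widehat{\mathbf V}^\top)=\rho$ gives the key identity $\norm{\mathbf P_{\widehat{\mathbf U}}\nabla F(\What_T)\mathbf P_{\widehat{\mathbf V}}}_{\rm F}=\lambda_{\ntrain}\sqrt{\rho}$, so it remains to upper bound the left-hand side.

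Next I would split the gradient using $y_{t,n}=\inn{\x_{t,n}}{\w_t}+\eta_{t,n}$ as $\nabla F(\What_T)=\mathbf R-\mathbf E$, where $\mathbf E=\frac{2}{\ntrain T}\sum_{t,n}\eta_{t,n}\,\x_{t,n}\otimes\e_t$ is the rescaled noise matrix appearing in $\Omega_{\ntrain}$ and $\mathbf R$ is the curvature term whose $t$-th column is $\frac{2}{T}\Sigmahat_{t,\ntrain}\Deltabf_t$, with $\Deltabf:=\What_T-\W_T$. The triangle inequality then gives $\lambda_{\ntrain}\sqrt\rho\le\norm{\mathbf P_{\widehat{\mathbf U}}\mathbf R\mathbf P_{\widehat{\mathbf V}}}_{\rm F}+\norm{\mathbf P_{\widehat{\mathbf U}}\mathbf E\mathbf P_{\widehat{\mathbf V}}}_{\rm F}$. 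The noise term has rank at most $\rho$, hence its Frobenius norm is at most $\sqrt\rho\,\opnorm{\mathbf E}$; on $\Omega_{\ntrain}$ one has $\opnorm{\mathbf E}\le\frac{2}{\ntrain T}\cdot\frac{\ntrain\lambda_{\ntrain}}{4}\le\frac{\lambda_{\ntrain}}{2}$, so this term is at most $\frac{\lambda_{\ntrain}\sqrt\rho}{2}$ and can be absorbed into the left-hand side, leaving $\frac{\lambda_{\ntrain}\sqrt\rho}{2}\le\norm{\mathbf R}_{\rm F}$. Finally I would pass from $\mathbf R$ to the RSC quadratic form via $\norm{\mathbf R}_{\rm F}^2=\frac{4}{T^2}\sum_t\norm{\Sigmahat_{t,\ntrain}\Deltabf_t}^2\le\frac{4}{T^2}\max_t\opnorm{\Sigmahat_{t,\ntrain}}\,\norm{\mathrm{Vec}(\Deltabf)}_{\Sigmabarhat_{\ntrain}}^2$.

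It then remains to control $\norm{\mathrm{Vec}(\Deltabf)}_{\Sigmabarhat_{\ntrain}}^2$, which is exactly the prediction error of the trace-norm estimator. On $\Omega_{\ntrain}$ the noise is dominated by the regularization ($\opnorm{\mathbf E}\le\lambda_{\ntrain}/2$), so the basic inequality $F(\What_T)+\lambda_{\ntrain}\nucnorm{\What_T}\le F(\W_T)+\lambda_{\ntrain}\nucnorm{\W_T}$ forces $\Deltabf$ into the cone $\Ccal(r)$ attached to the rank-$r$ matrix $\W_T$; combining the cone bound $\nucnorm{\Deltabf}\le 4\sqrt{2r}\,\norm{\Deltabf}_{\rm F}$ with RSC($r$), i.e.\ $2\kappa(\Sigmabarhat_{\ntrain})\norm{\Deltabf}_{\rm F}^2\le\norm{\mathrm{Vec}(\Deltabf)}_{\Sigmabarhat_{\ntrain}}^2$, yields the standard estimate $\norm{\mathrm{Vec}(\Deltabf)}_{\Sigmabarhat_{\ntrain}}^2\le C\,T^2\lambda_{\ntrain}^2\,r/\kappa(\Sigmabarhat_{\ntrain})$. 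Substituting into the previous display, the factors $\lambda_{\ntrain}^2$ and $T^2$ cancel and one is left with $\rho\le C'\,\max_t\opnorm{\Sigmahat_{t,\ntrain}}\,r/\kappa(\Sigmabarhat_{\ntrain})$, which is the claim once the operator norm $\max_t\opnorm{\Sigmahat_{t,\ntrain}}$ (an $O(1)$ quantity under Assumption \ref{Ass:BoundedNorms}) is absorbed into the constant. The main obstacle I anticipate is the one-power mismatch between the curvature term $\mathbf R$, which carries $\Sigmahat_{t,\ntrain}^2$ through $\norm{\Sigmahat_{t,\ntrain}\Deltabf_t}$, and the RSC form, which only controls $\Deltabf_t^\top\Sigmahat_{t,\ntrain}\Deltabf_t$; bridging this gap is what introduces the operator-norm factor, and it must be handled so that this factor, rather than $\kappa$, absorbs the extra power. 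A secondary technical point is to verify cone membership and that the empirical Gram operator $\Sigmabarhat_{\ntrain}$ inherits the RSC property with constant $\kappa(\Sigmabarhat_{\ntrain})$, so that the prediction-error bound can legitimately be invoked in the stated form.
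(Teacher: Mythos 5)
Your overall architecture --- KKT stationarity with the nuclear-norm subdifferential, testing on the range of $\What_T$, absorbing the noise term via $\Omega_{\ntrain}$, and closing with the cone/RSC prediction-error bound --- is the same skeleton as the paper's proof, and those steps are sound. The genuine gap is your very last move: you bound the curvature term by $\norm{\mathbf{R}}_{\rm F}^2 \le \frac{4}{T^2}\max_t \opnorm{\Sigmahat_{t,\ntrain}}\, \norm{\mathrm{Vec}(\Deltabf)}^2_{\Sigmabarhat_{\ntrain}}$ and then declare $\max_t\opnorm{\Sigmahat_{t,\ntrain}}$ to be an $O(1)$ quantity absorbable into the absolute constant. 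That is not justified. First, $\Omega_{\ntrain}$ controls only the noise matrix $\sum_{t,n}\eta_{t,n}\x_{t,n}\otimes\e_t$ and says nothing about $\opnorm{\Sigmahat_{t,\ntrain}}$, so what you actually prove on $\Omega_{\ntrain}$ is $\rho(\What_T)\le C\max_t\opnorm{\Sigmahat_{t,\ntrain}}\, r/\kappa(\Sigmabarhat_{\ntrain})$ --- a weaker statement than the lemma, whose constant is required to be absolute and design-free. Second, even as a separate high-probability claim, Assumption \ref{Ass:BoundedNorms} only yields $\opnorm{\Sigmahat_{t,\ntrain}}\lesssim \max_k\opnorm{\Sigma_k}\left(1 \vee d/\ntrain\right)$; in the high-dimensional regime $d\gg\ntrain$ --- precisely the regime the paper advertises by dropping the invertibility of $\Sigmabf$ --- this factor is of order $d/\ntrain$, not $O(1)$, and the resulting bound $\rho(\What_T)\lesssim (d/\ntrain)\, r/\kappa$ would be too weak to support the downstream rank control $\rho(\What_T)\lesssim r$ in \eqref{eq:rankWhat}, on which Theorem \ref{Th:NewBias2} and Corollary \ref{cor:main} rest.

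The paper handles the ``one-power mismatch'' you correctly flagged in a different way: instead of passing to the Frobenius norm of the curvature matrix (which carries $\Sigmahat_{t,\ntrain}^2$), it sums the KKT relations tested against the individual rank-one directions $\uhat_j\otimes\vhat_j$ and bounds the resulting sum of squares directly by the empirical prediction error $\frac{1}{\ntrain^2}\norm{\Acal(\Delta_{\ntrain})}_{\rm F}^2$ (only one power of the design), then invokes the basic inequality $\frac{1}{\ntrain^2}\norm{\Acal(\Delta_{\ntrain})}^2_{\rm F}\le\frac{2}{\ntrain}\lambda_{\ntrain}\nucnorm{\Delta_{\ntrain}}$ from Lemma 1 of \cite{cella2022multi}, followed by the cone bound $\nucnorm{\Delta_{\ntrain}}\le 4\sqrt{2r}\norm{\Delta_{\ntrain}}_{\rm F}$ and the RSC bound on $\norm{\Delta_{\ntrain}}_{\rm F}$; no operator norm of an empirical covariance ever appears, which is what lets the claimed absolute constant survive. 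To repair your argument you would either need to reproduce that chain, or add an extra event (and the condition $\ntrain\gtrsim d$) guaranteeing $\max_t\opnorm{\Sigmahat_{t,\ntrain}} = O\left(\max_k\opnorm{\Sigma_k}\right)$ --- which changes both the statement and the probability budget of the lemma.
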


\begin{proof}

We define first the SVD-decomposition of $\What_T$ as
\[
   \What_T = \sum_{j=1}^{\rhat} \sigma_j(\What_T) \uhat_j(\What_T) \otimes \vhat_j(\What_T) 
\]
where we set $\rhat = \rho(\What_T)$ and $\sigma_1(\What_T)\geq \cdots\geq \sigma_{\rhat}(\What_T)>0$ are the singular values of $\What$ with corresponding left and right singular vectors $\uhat_j(\What_T)$ and $\vhat_j(\What_T)$ respectively. We also define the support spaces of $\What_T$ as 
$$
\Shat_1:=\mathrm{l.s.}(\uhat_1(\What_T),\ldots,\uhat_{\rhat}(\What_T)),\quad \Shat_2:=\mathrm{l.s.}(\vhat_1(\What_T),\ldots,\vhat_{\rhat}(\What_T)).
$$

As it was defined in Equation (\ref{Eq:What}), we have
\[
    \What_T \in \argmin_{\W \in \reals^{d\times T}}\left\{ \frac{1}{\ntrain} \sum_{t,n=1}^{T,\ntrain} \left(y_{t,n} - \langle \x_{t,n}, \w_{t,n} \rangle \right)^2+ \lambda_{\ntrain} \norm{\W}_* \right\}.
\]
The KKT condition gives, for every $\Hbf \in\reals^{d\times T}$, with $\Hbf=(\h_1|\dots|\h_T)$,
\begin{equation*}
    -\frac{2}{\ntrain}\sum_{t=1}^T\sum_{n=1}^{\ntrain}\left(y_{t,n} - \langle \x_{t,n}, \what_{t} \rangle \right) \langle \x_{t,n}, \h_t\rangle + \lambda_{\ntrain} \langle \Vhat, \Hbf \rangle = 0
\end{equation*}
where 
\begin{equation*}
    \Vhat\in\partial\norm{\What_T}_* = \left\{ \sum_{j=1}^{\rhat} \uhat_j \otimes \vhat_j + \Pbf_{\Shat^\bot_1} \A \Pbf_{\Shat^\bot_2}, \A \in \reals^{d\times T}: \norm{\A}\leq 1\right\}.
\end{equation*}

For $\Hbf^j = \uhat_j \otimes \vhat_j$ with $1\leq j\leq \rhat$, the following holds
\[
    0 = -\frac{2}{\ntrain} \sum_{t=1}^T \sum_{n=1}^{\ntrain} \langle \x_{t,n}, \w_t - \what_t\rangle \langle \x_{t,n}, \Hbf^j_t\rangle - \frac{2}{\ntrain}\sum_{t=1}^T\sum_{n=1}^{\ntrain} \eta_{t,n} \langle \x_{t,n}, \Hbf^j_t \rangle + \lambda_{\ntrain} \langle \Vhat, \Hbf^j_t \rangle.
\]
On the event \eqref{Eq:NoisyBound}, the following holds,
\begin{equation*}
     - \frac{2}{\ntrain}\sum_{t=1}^T\sum_{n=1}^{\ntrain} \eta_{t,n} \langle \x_{t,n}, \Hbf^j_t \rangle = - \frac{2}{\ntrain} \langle \sum_{t=1}^T\sum_{n=1}^{\ntrain} \eta_{t,n} \x_{t,n} \otimes \e_t, \Hbf^j \rangle_{HS} = \langle \Acal^*\left(\eta_{t,n}\right), \Hbf^j\rangle_{HS}.
\end{equation*}
Thus,
\begin{equation*}
    \left \lvert \frac{2}{\ntrain}\sum_{t=1}^T \sum_{n=1}^{\ntrain}\eta_{t,n} \langle \x_{t,n}, \Hbf^j_t \rangle \right\rvert \leq 2 \norm{\sum_{t=1}^T \eta_{t,n} \x_{t,n} \otimes \e_t} \norm{\Hbf^j}_*.
\end{equation*}
Define $\Acal:\reals^{d\times T}\rightarrow\reals^{n\times T}$ as $\left[\Acal[\Hbf^j]\right]_n^t = \langle \x_{t,n} \otimes \e_t, \Hbf^j\rangle_{HS}$, then we have
\begin{equation*}
    \frac{1}{\ntrain} \langle \Acal(\Delta_{\ntrain}) , \Hbf^j \rangle_{HS} = \frac{1}{\ntrain} \sum_{t=1}^T\sum_{n=1}^{\ntrain} \left(\Hbf_t^j\right)^\top \left(\x_{t,n} \otimes \x_{t,n}\right) \left(\W-\What\right)_t.
\end{equation*}
On the event $\Omega_{\ntrain}$ we have
\[
    \frac{2}{\ntrain}\sum_{t=1}^T\sum_{n=1}^{\ntrain} \left(\Hbf_t^j\right)^\top \left(\x_{t,n}\otimes\x_{t,n}\right)\left(\W-\What\right)_t \geq \frac{\lambda_{\ntrain}}{2}
 \hspace{1em} \forall j \in 1,\dots,\rhat.\]
Thus we have
\begin{eqnarray*}
    \frac{1}{{\ntrain}^2}\sum_{t=1}^T \norm{\left[\Acal(\Delta_{\ntrain})\right]^j}_2^2 & = & \frac{1}{{\ntrain}^2} \norm{\Acal(\Delta_{\ntrain})}_F^2 \\
    & \geq & \frac{1}{N^2}\sum_{j=1}^{\rhat} \langle \Acal(\Delta_{\ntrain}), \Hbf^j\rangle_{HS}^2 \geq \sum_{j=1}^{\rhat} \frac{\lambda^2_{\ntrain}}{16} = \rhat \frac{\lambda_{\ntrain}^2}{16}.
\end{eqnarray*}
Next, always on $\Omega_{\ntrain}$, we have $\frac{1}{{\ntrain}^2}\norm{\Acal(\Delta_{\ntrain})}^2_F \leq \frac{2}{\ntrain}\lambda_{\ntrain}\norm{\Delta_{\ntrain}}_*$ thanks for \citep{cella2022multi}[Lemma 1]. Additionally, as $\norm{\Delta_{\ntrain}}_*\leq 4 \norm{\Delta'_{\ntrain}}_*$ and since $\Delta'_{\ntrain}$ is of rank at most $2r$ we get 
\[
    \frac{1}{{\ntrain}^2}\norm{\Acal(\Delta_{\ntrain})}^2_F \leq \frac{8}{\ntrain}\lambda_{\ntrain}\norm{\Delta'_{\ntrain}}_*.
\]
Combining the last two displays, we get on event $\Omega_{\ntrain}$
\[
    \frac{\rhat}{16}\lambda_{\ntrain}^2 \leq \frac{8}{\ntrain} \lambda_{\ntrain} \norm{\Delta'_{\ntrain}}_* \leq \frac{8\sqrt{2}}{\ntrain} \lambda_{\ntrain} \sqrt{r} \norm{\Delta_{\ntrain}}_F.
\]
Since $\norm{\Delta_{\ntrain}}_F\leq \frac{32\lambda_{\ntrain}\sqrt{r}}{\kappa(\Sigmabarhat_{\ntrain})}$ we obtain $\rhat\leq C\frac{r}{\kappa(\Sigmabarhat_{\ntrain})}$ for some absolute constant $C>0$. 
\end{proof}

Define
\begin{align}
    \label{eq:N0sampleT}
  \ntrain_0(x):=  C'\left(\frac{\nu \omega_\Xcal \max_{1\leq k\leq K}\left\lbrace \opnorm{\Sigma_k} \right\rbrace}{\kappa(\Sigmabar)} \bigvee 1 \right) \frac{\nu \omega_\Xcal C_{TKN}(x)}{\kappa(\Sigmabar)}(x+\log(dT)),
\end{align}
for some large enough absolute constant $C'>0$ and 
$$
C_{TKN}(x) = C \left( d+ \sqrt{d(x+\log(TK\ntrain))}\vee (x+\log(TK\ntrain))\right)$$ 
with
$$
C=C(\max_{1\leq k\leq K}\left\lbrace \|\Sigmabf_k\|_{\mathrm{op}}\right\rbrace,C_{\mathbf{z}}))>0.
$$
See \cite{cella2022multi}[Lemma 2] for more details.

Under the RSC($r$) condition satisfied by $\Sigmabf$ and Lemma 4 in \cite{cella2022multi}, we get for any $\ntrain\geq \ntrain_0(x)$, with probability at least $1-e^{-x}$, that $\kappa(\Sigmabarhat_{\ntrain})\geq \frac{\kappa\big(\Sigmabf\big)}{4 \nu \omega_\Xcal}>0.$. According to Proposition 1 in  \cite{cella2022multi}, we also have with that $\mathbb{P}\left(\Omega_{\ntrain} \right)\geq 1-e^{-x}$, provided that
\begin{align}
\label{eq:stoproofthm1}
\lambda_{\ntrain} = \lambda_{\ntrain}(x)&:= C \left(  \frac{T+d}{\ntrain} \bigvee \frac{x+\log(2N)}{\ntrain} \bigvee \sqrt{\frac{(T+d)}{\ntrain}} \bigvee  \sqrt{ \frac{x+ \log(2N)}{\ntrain}}\right)
\end{align}
for some large enough constant $C = C\left(c_{\eta}, C_{\mathbf{z}},\sigma,\max_{k}\left\lbrace\|\Sigma_{k}^{1/2}\|_{\mathrm{op}}\right\rbrace\right)>0$. A standard uniond bound (up to a rescaling of the constants) gives in combination with Lemma \ref{Le:Rank}, with probability at least $1-e^{-x}$,
\begin{align}
\label{eq:rankWhat}
\rho(\What_T)\lesssim C r,
\end{align}
for some numerical constant $C = C\left(c_{\eta}, C_{\mathbf{z}},\sigma,\nu ,\omega_\Xcal,\kappa\big(\Sigmabar\big),\max_{k}\left\lbrace\|\Sigma_{k}^{1/2}\|_{\mathrm{op}}\right\rbrace\right)>0$.

\section{Proof of Theorem \ref{Th:NewBias2}
} \label{AppSec:NewBias}

We present the proof for the bandit martingale setting which also includes the i.i.d. setting. Indeed the stochastic objects introduced in the proofs of Lemmas 3 through 7 satisfy a martingale structure which is still valid when we replace our $K$ arms bandit policy by i.i.d. observations with $K=1$. Lemma 2 does not require any stochastic structure. In order to obtain the control of the rank with high probability, this lemma is combined next with Proposition 1 in \cite{cella2022multi} which is also valid in the i.i.d.

For any matrix $\A$ of rank at least $r$, we denote by $\sigma_{r}\left(\A\right)$ its $r$-th largest singular value.

\begin{proof}[Proof of Theorem \ref{Th:NewBias2}]
We prove the result for $\Bhat$. The proof for $\Bbar$ is almost identical and actually simpler.
Recall that $\Bbf\in\reals^{d\times r}$ be a matrix of rank $r$ with orthonormal columns. We denote by $\V$ the range $\Im(\Bbf)$. Analogously, we denote by $\Vhat$ the range $\Im(\Bhat)$ of $\Bhat\in \reals^{d\times \tilde{r}}$. We set $\tilde{r}:=\rho(\Bhat)$. According to \eqref{eq:rankWhat}, we have with probability at least $1-e^{-x}$ that  $\tilde{r}\leq C r$ for some constant $C>0$ provided that $\ntrain\geq \ntrain_0(x)$. We assume that $c$ is large enough such that $C<c-1$. 



We denote by $\lambda_{\tilde{r}}(\Sigmabf)$ the $\tilde{r}$-th largest eigenvalue of $\Sigmabf$. We also recall that $\widehat{\Sigmabf}_{n}:=\frac{1}{n}\X_{n}\X_{n}^\top$ is the empirical version of $\Sigmabf$. Since $\Bhat$ is built based on the recorded history of $T$ prior tasks, it can be considered frozen when we consider the new $(T+1)$-th task run on $N$ rounds. Thus we can apply Lemma \ref{lem:invertibility} with $\overline{\mathbf{M}} = \Bhat$. We get, for any $n\geq N_0(x)$
,with probability at least $1-e^{-x}$,
\begin{align}
\label{eq:proofrankbias1}
    \lambda_{\tilde{r}}(\Bhat^\top\widehat{\Sigmabf}_{n}\Bhat)>\frac{\kappa(\Sigmabf)}{4\nu \omega_{\Xcal}}>0.
\end{align}


At any round $n\geq N_0(x)$, considering $\Y_{n} = [y_{T+1,1},\dots,y_{T+1,n}]^\top\in\reals^n$, $\etab_{n} = [\eta_{T+1,1},\dots,\eta_{T+1,n}]\in\reals^{n}$ and $\X_{n} = \left[\x_{T+1,1}|\dots|\x_{T+1,n}\right] \in\reals^{d\times n}$ we have
\[
    \Y_{n} = \X_{n}^\top \B \alphabf + \etab_n \in \reals^{n}.
\]
In agreement with \eqref{Eq:alphahat}, since $\alphahat =\alphahat_{T+1,n}$ is a solution to the minimization problem
$\min_{\ubf\in\reals^r}\norm{\Y_{n} - \X_{n}^\top \Bhat \ubf}^2$. In view of \eqref{eq:proofrankbias1}, $\alphahat_{n}$ is unique and
\[
    \alphahat_{n} = \left(\Bhat^\top\X_{n}\X_{n}^\top\Bhat\right)^{-1} \Bhat^\top \X_{n}\Y_{n} = \frac{1}{n} (\Bhat^\top\widehat{\Sigmabf}_{n}\Bhat)^{-1}\X_{n} \Y_{n},
\]
where $\widehat{\Sigmabf}_{n}:=\frac{1}{n}\X_{n}\X_{n}^\top$ is the empirical version of $\Sigmabf$. 


We recall that $\Bhat\in \reals^{d\times \tilde{r}}$ and $\alphahat_{n}\in \reals^{\tilde{r}}$ whereas $ \Bbf\in \reals^{d\times r}$ and $\alphabf\in \reals^{r}$ with $r\leq \tilde{r}$. To avoid any incompatibility of dimension in the following analysis, we complete both $\Bbf$ and $\Bhat$ into $d\times d$ orthogonal projection matrices $\Pbf= ( \Bbf\vert \mathbf{0}_{d\times (d-r)})$ and $\Phat = ( \Bhat\vert \mathbf{0}_{d\times (d-\tilde{r})})$. Similarly we augment vectors $\alphabf\in \reals^{r}$ and $\alphahat_n\in \reals^{\tilde r}$ into $d$ dimensional vectors as follows $\overline{\alphabf} = (\alphabf^\top \vert \mathbf{0}_{d-r}^\top)^\top$ and $\overline{\alphahat} = (\alphahat^\top \vert \mathbf{0}^\top_{d-\tilde{r}})^\top$. 
Thus we have $\B\alphabf= \Pbf\overline{\alphabf}$, $\Bhat\alphahat_{n} =\Phat\overline{\alphahat}_{n} $ and $\Bhat(\alphabf^\top\vert \mathbf{0}_{\tilde{r}-r}^\top)^\top= (\B\vert \mathbf{0}_{d\times (\tilde{r}-r}))(\alphabf^\top\vert \mathbf{0}_{\tilde{r}-r}^\top)^\top = \Phat\overline{\alphabf}$ .

In view of \eqref{eq:proofrankbias1}, we have $\left(\Bhat^\top\X_{n}\X_{n}^\top \Bhat\right)^{-1}\left(\Bhat^\top\X_{n}\X_{n}^\top \Bhat\right) = \I_r$. Thus we get
\begin{align*}
    \Bhat\alphahat_{n} &= \Bhat\left(\Bhat^\top\X_{n}\X^\top_{n}\Bhat\right)^{-1}\Bhat^\top \X_{n}\left(\X_{n}^\top \Bbf \alphabf  + \etab_{n}\right)\\
    &=\Phat\overline{\alphabf} + \left(\Bhat^\top\X_{n}\X_{n}^\top \Bhat\right)^{-1}\left[ \Bhat^\top\X_{n}\X_{n}^\top \left(\Pbf - \Phat \right) \overline{\alphabf} + \Bhat^\top \X_n \etab_n \right].
\end{align*}

Thus we get the following decomposition
\begin{align}
\label{eq:intermbiasterm1}
    \Bhat\alphahat_{n} - \B\alphabf = (\Phat-\Pbf)\overline{\alphabf}  + \left(\Bhat^\top\X_{n}\X_{n}^\top \Bhat\right)^{-1}\left[ \Bhat^\top\X_{n}\X_{n}^\top \left(\Pbf - \Phat \right) \overline{\alphabf} + \Bhat^\top \X_n \etab_n \right].
\end{align}

Denote by $\overline{\Pbf}$ the orthogonal projection onto $\Im\left( \Phat\right)+\Im\left( \Pbf\right)$. Recall \eqref{eq:proofrankbias1}. Thus the second term can be upper bounded by
\begin{align}
\label{eq:intermbiasterm2}
   \norm{\Bhat\left(\Bhat^\top\widehat{\Sigmabf}_{n}\Bhat\right)^{-1} \Bhat^\top \widehat{\Sigmabf}_{n} \left(\Pbf - \Phat \right) \bar{\alphabf}}
   &\leq \opnorm{\Bhat}\opnorm{\left(\Bhat^\top\widehat{\Sigmabf}_{n}\Bhat\right)^{-1} } \opnorm{\Bhat^\top \widehat{\Sigmabf}_{n} \overline{\Pbf}}\norm{(\Phat - \Pbf )\overline{\alphabf}}\notag\\
   &\leq \frac{1}{\lambda_{\tilde{r}}(\Bhat^\top\widehat{\Sigmabf}_{n}\Bhat)} \opnorm{\Bhat^\top \widehat{\Sigmabf}_{n} \overline{\Pbf}} \norm{(\Phat - \Pbf )\overline{\alphabf}}\notag\\
   &\leq\frac{4\nu \omega_{\Xcal}}{\kappa(\Sigmabf)} \opnorm{\Bhat^\top \widehat{\Sigmabf}_{n} \overline{\Pbf}} \norm{(\Phat - \Pbf )\overline{\alphabf}},
\end{align}
since we obviously have $\opnorm{\Bhat} =1$.

We note that $\bar{r} = \rho(\overline{\Pbf})\leq C r$ on an event of probability greater than $1-e^{-x}$ in view of \eqref{eq:rankWhat}. Thus, there exists a $d\times \bar{r}$ matrix $\overline{\mathbf{M}}$ with orthonormal columns such that  $\opnorm{\Bhat^\top \widehat{\Sigmabf}_{n} \overline{\Pbf}} = \opnorm{\Bhat^\top \widehat{\Sigmabf}_{n} \overline{\mathbf{M}}}$. Next, lemma \ref{Le:QuadraticForm} gives with probability at least $1-e^{-x}$, for any $n\in [N]$, that
\[
    \opnorm{\Bhat^\top \widehat{\Sigmabf}_{n} \overline{\mathbf{M}}- \Bhat^\top \Sigmabf_{n} \overline{\mathbf{M}} } \lesssim_{C_{\x}}\left(\sqrt{\frac{r}{n}} \bigvee \sqrt{\frac{x+ \log(N)}{n}} \bigvee \frac{r}{n} \bigvee \frac{x+ \log(N)}{n} \right),
\]
where we recall that 
$$
\Sigmabf_{n} := \frac{1}{n}\sum_{s=1}^n \mathbb{E}\left[ \x_{T+1,s}\x_{T+1,s}^\top \bigg\vert \Fbar_{s-1}\right].
$$
Consequently, with the same probability, we have for any $n\in [N]$
\begin{align*}
        \opnorm{\Bhat^\top \widehat{\Sigmabf}_{n} \overline{\mathbf{M}}} &\leq \opnorm{\Bhat^\top \Sigmabf_{n} \overline{\mathbf{M}} }+ \opnorm{\Bhat^\top (\widehat{\Sigmabf}_{n}-\Sigmabf_{n}) \overline{\mathbf{M}} }\\
    &\leq \opnorm{\Bhat^\top \Sigmabf_{n} \overline{\mathbf{M}} }+ C\left(\sqrt{\frac{r}{n}} \bigvee \sqrt{\frac{x+ \log(N)}{n}} \bigvee \frac{r}{n} \bigvee \frac{x+ \log(N)}{n} \right),
\end{align*}
for some numerical constant $C>0$ that can depend only on $C_{\x}$.



Lemma \ref{lem:controlsto} gives 
\begin{align*}
      \opnorm{\Bhat^\top \Sigmabf_{n} \overline{\mathbf{M}} }&\leq 
      \max_{1\leq k \leq K } \left\lbrace\opnorm{\Sigma_{k}}\right\rbrace.
\end{align*}



Combining the last two displays gives for any $x>0$, with probability at least $1-e^{-x}$, for any $n\in [N]$, that
\[
    \opnorm{\Bhat^\top \widehat{\Sigmabf}_{n} \overline{\mathbf{M}}} \leq  \max_{k\in [K]}\lbrace  \opnorm{ \Sigmabf_k }\rbrace  + C
   \left(\sqrt{\frac{r}{n}} \bigvee \sqrt{\frac{x+ \log(N)}{n}} \bigvee \frac{r}{n} \bigvee \frac{x+ \log(N)}{n} \right).
\]
Note that when $n\geq N_0(x)$, we have with probability at least $1-e^{-x}$
\[
    \opnorm{\Bhat^\top \widehat{\Sigmabf}_{n} \overline{\mathbf{M}}} \lesssim  2 \max_{k\in [K]}\lbrace  \opnorm{ \Sigmabf_k }\rbrace.
\]

Combining the previous display with \eqref{eq:intermbiasterm2} (up to a rescaling of the constants) gives with probability at least $1-e^{-x}$
\begin{align}
\hspace{-.2truecm}\label{eq:intermbiasterm3}
   \norm{\Bhat\left(\Bhat^\top\widehat{\Sigmabf}_{n}\Bhat\right)^{-1} \Bhat^\top \widehat{\Sigmabf}_{n} \left(\Pbf - \Phat \right) \bar{\alphabf}}
   &\leq\frac{4\nu \omega_{\Xcal}\, (1\vee \max_{k\in [K]}\lbrace  \opnorm{ \Sigmabf_k }\rbrace)}{\kappa(\Sigmabf)}  \norm{(\Phat - \Pbf )\overline{\alphabf}}.
\end{align}

The $d\times T$ matrix $\What_{T}$ admits SVD
$$
\What_{T} = \sum_{j=1}^{\tilde{r}} \sigma_j(\What_{T}) \uhat_j\otimes \vhat_j,
$$
with singular values $ \sigma_1(\What_{T})\geq \cdots\geq  \sigma_{\tilde{r}}(\What_{T})$ and corresponding left and right singular vectors $(\uhat_j)_{j}$, $(\vhat_j)_{j}$. Note that
$$
\Phat = \sum_{j=1}^{\tilde{r}} \uhat_j\otimes\uhat_j.
$$
We set $\Phat_{\leq r} =\sum_{j=1}^{r} \uhat_j\otimes\uhat_j $ and $\Phat_{> r} =\sum_{j=r+1}^{\tilde{r}} \uhat_j\otimes\uhat_j $. We have
\begin{align*}
  \norm{(\Phat - \Pbf)\overline{\alphabf}}  &= \norm{(\Phat_{\leq r} - \Pbf)\overline{\alphabf}+ \Phat_{> r} \overline{\alphabf}} \\
  &\leq \norm{(\Phat_{\leq r} - \Pbf)\overline{\alphabf}}+
  \norm{\Phat_{> r} \overline{\alphabf}}\\
  &\leq \norm{(\Phat_{\leq r} - \Pbf)\overline{\alphabf}}+
  \norm{\Phat_{> r}(\Pbf_{\leq r}(\overline{\alphabf}))},
\end{align*}
where we have used in the last line that $\overline{\alphabf} =\Pbf_{\leq r}(\overline{\alphabf})$ by definition of $\overline{\alphabf}$ and $\Pbf =\Pbf_{\leq r}$.
Next we note that
\begin{align*}
    \Phat_{> r}\Pbf_{\leq r} &=  (\Phat_{> r} - \Pbf_{> r})\Pbf_{\leq r} = \biggl( (\I- \Phat_{\leq r}) - (\I - \Pbf_{\leq r}) \biggr) = \Pbf_{\leq r} - \Phat_{\leq r} = \Pbf - \Phat_{\leq r} .
\end{align*}
Combining the last two displays gives
\begin{align}
\norm{(\Phat - \Pbf)\overline{\alphabf}}  \leq 2
\norm{(\Pbf - \Phat_{\leq r})(\overline{\alphabf})}\leq \opnorm{\Phat_{\leq r} - \Pbf}\norm{\overline{\alphabf}}.
\end{align}

In order to get a bound on $\opnorm{\Phat_{\leq r} - \Pbf}$, we will use the following standard perturbation bound (See for instance Theorem 3 in \cite{blanchard2005})
\[
\opnorm{\Phat_{\leq r} - \Pbf} \leq  \frac{1}{\sigma_{r}\left(\W\right)} \opnorm{\What_T - \W},
\]
where $\sigma_{r}\left(\W\right)$ is the $r$-th largest singular value of $\W$. Note this bound in \cite{blanchard2005} is formulated for symmetric matrices but this result can be immediately extended to arbitrary rectangular matrices by the well-known symmetrization trick. See for instance \citep{KoltchinskiiXia} page 398 for more details on this argument.

Combining the last two displays with \eqref{eq:intermbiasterm3}, we get (up to a rescaling of the constants), with probability at least $1-e^{-x}$, for any $n\geq N_0(x)$, that
\begin{align}
\label{eq:intermbiasterm4}
\hspace{-.2truecm}   \norm{\Bhat\left(\Bhat^\top\widehat{\Sigmabf}_{n}\Bhat\right)^{-1}\hspace{-.1truecm}   \Bhat^\top \widehat{\Sigmabf}_{n} \left(\bar{\Bbf} {-} \Bhat \right) \bar{\alphabf}}
   &\leq \frac{4\nu \omega_{\Xcal}\, (1\vee \max_{k\in [K]}  \opnorm{ \Sigmabf_k })}{\kappa(\Sigmabf)} \norm{\overline{\alphabf}}\frac{\opnorm{\What_T {-} \W}}{\lambda_{r}\left(\W\right)}.
\end{align}


We handle now the third term in \eqref{eq:intermbiasterm1}. We have
$$
\left(\Bhat^\top\X_{n}\X_{n}^\top \Bhat\right)^{-1}\Bhat^\top \X_n \etab_n= \frac{1}{n}\left(\Bhat^\top\widehat{\Sigmabf}_{n}\Bhat\right)^{-1} \Bhat^\top \X_n \etab_n.
$$
Hence, in view of \eqref{eq:proofrankbias1},
\begin{align*}
    \norm{\frac{1}{n}\left(\Bhat^\top\widehat{\Sigmabf}_{n}\Bhat\right)^{-1} \Bhat^\top \X_n \etab_n} &\leq \opnorm{\left(\Bhat^\top\widehat{\Sigmabf}_{n}\Bhat\right)^{-1}}\norm{\Bhat^\top \X_n \etab_n}\leq \frac{1}{n} \frac{4\nu \omega_{\Xcal}}{\kappa(\Sigmabf)}\frac{1}{n}\norm{\Bhat^\top \X_n \etab_n}.
\end{align*}
We have 
\begin{align*}
    \norm{\Bhat^\top \X_n \etab_n}^2 &= \sum_{j=1}^{\tilde{r}} \langle\vhat_j, \X_n\etab_n \rangle^2 = \sum_{j=1}^{\tilde{r}} \langle \X_n^\top \vhat_j, \etab_n \rangle^2 = \sum_{j=1}^{\tilde{r}} \left( \sum_{s=1}^{n} \eta_s \langle \x_s , \vhat_j\rangle \right)^2.
\end{align*}

Lemma \ref{lem:technorm} combined with an union bound guarantees that, for any $x>0$, with probability at least $1-e^{-x}$, for any $n\geq N_0(x)$, that
\begin{align}
\label{eq:3rdterm}
    \frac{1}{n}\norm{\Bhat^\top \X_n \etab_n} & \lesssim \left(\sqrt{2} c_{\eta} C_{\x} \, \frac{\log(4N\tilde{r}) + x)}{n}  + \sigma\, \max_{k \in [K]} \opnorm{\Sigma_{k}}^{1/2}\ \,\sqrt{2\, \frac{\log(4N\tilde{r}) + x)}{n} }\right)\sqrt{r}.
\end{align}

Combining \eqref{eq:intermbiasterm1}, \eqref{eq:intermbiasterm4}, and \eqref{eq:3rdterm} (up to a rescaling of the constants) gives the result.
\end{proof}

\begin{lemma}
\label{lem:controlsto}
Let Assumption \ref{Ass:BoundedNorms} be satisfied. We have
\begin{align}
      \opnorm{\Bhat^\top \Sigmabf_{n} \overline{\mathbf{M}} }&\leq \max_{k \in [K]} \opnorm{\Sigma_{k}}.
\end{align}

\end{lemma}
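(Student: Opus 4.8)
The plan is to strip off the two orthonormal factors, reduce the averaged matrix to a single predictable per-round covariance, and then bound the conditional covariance of the greedily selected arm.

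First I would remove $\Bhat$ and $\overline{\mathbf{M}}$. Since $\Bhat^\top\Bhat=\I_{\tilde{r}}$ and $\overline{\mathbf{M}}^\top\overline{\mathbf{M}}=\I_{\bar{r}}$, both factors have unit operator norm, and $\Sigmabf_n$ is positive semi-definite, being an average of conditional expectations of rank-one positive semi-definite matrices. For unit vectors $\ubf,\vbf$, the Cauchy--Schwarz inequality applied to the positive semi-definite form $\Sigmabf_n$ gives
$$\langle \Bhat\ubf, \Sigmabf_n\overline{\mathbf{M}}\vbf\rangle \leq \sqrt{(\Bhat\ubf)^\top\Sigmabf_n(\Bhat\ubf)}\,\sqrt{(\overline{\mathbf{M}}\vbf)^\top\Sigmabf_n(\overline{\mathbf{M}}\vbf)} \leq \opnorm{\Sigmabf_n},$$
so that $\opnorm{\Bhat^\top\Sigmabf_n\overline{\mathbf{M}}}\leq\opnorm{\Sigmabf_n}$. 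It therefore suffices to bound $\opnorm{\Sigmabf_n}$.

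Next I would reduce to a single round. Writing $\Sigmabf_n=\frac1n\sum_{s=1}^n\Sigmabf_{T+1,s}$ with $\Sigmabf_{T+1,s}=\E[\x_{T+1,s}\x_{T+1,s}^\top\mid\Fbar_{s-1}]$, convexity and the triangle inequality for the operator norm give $\opnorm{\Sigmabf_n}\leq\max_{1\leq s\leq n}\opnorm{\Sigmabf_{T+1,s}}$, so it is enough to show $\opnorm{\Sigmabf_{T+1,s}}\leq\max_{1\leq k\leq K}\opnorm{\Sigma_k}$ for each fixed $s$. Conditioning on $\Fbar_{s-1}$, the direction $\w:=\Bhat\alphahat_{T+1,s}$ is frozen and the decision set $\{\x_1,\dots,\x_K\}$ is drawn from $p$ independently of the past. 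Letting $E_k$ denote the event that arm $k$ maximizes $\langle\x,\w\rangle$, these events partition the sample space, so $\sum_k\Ind\{E_k\}=1$ and
$$\Sigmabf_{T+1,s}=\sum_{k=1}^K\E\!\left[\x_k\x_k^\top\,\Ind\{E_k\}\right],\qquad \ubf^\top\Sigmabf_{T+1,s}\ubf=\sum_{k=1}^K\E\!\left[(\ubf^\top\x_k)^2\,\Ind\{E_k\}\right].$$

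The hard part is this last step: controlling the selection bias. Because the policy retains the arm with the largest inner product against $\w$, the selected arm is correlated with $\w$, and its conditional second moment cannot be handled by the crude domination $\E[\x_k\x_k^\top\Ind\{E_k\}]\preceq\Sigma_k$ summed over $k$, which would lose a factor $K$. Instead I would exploit the partition $\sum_k\Ind\{E_k\}=1$ jointly with the sub-Gaussian tails granted by Assumption~\ref{Ass:BoundedNorms} to dominate the second moment of the selected coordinate by the largest marginal one, $\max_k\ubf^\top\Sigma_k\ubf\leq\max_k\opnorm{\Sigma_k}$; where the extreme selected arm must be compared against the remaining ones, the skewness and correlation control of Assumption~\ref{Ass:ArmsDistribution} is the natural lever. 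Taking the supremum over unit $\ubf$ then yields $\opnorm{\Sigmabf_{T+1,s}}\leq\max_k\opnorm{\Sigma_k}$, and chaining the three reductions gives the claim. The first two reductions are routine; essentially all of the content lies in showing that the adaptive, reward-driven choice of arm does not inflate the operator norm beyond the largest marginal covariance.
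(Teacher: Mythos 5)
Your three reductions are correct as far as they go: stripping the orthonormal factors via Cauchy--Schwarz to get $\opnorm{\Bhat^\top\Sigmabf_n\overline{\mathbf{M}}}\leq\opnorm{\Sigmabf_n}$, averaging down to a single round, and writing the conditional covariance of the selected arm as the mixture $\sum_{k=1}^K\E\bigl[\x_k\x_k^\top\Ind\{E_k\}\bigr]$ are all valid (and in fact more careful than the paper's own write-up). But your proof stops exactly where the content begins. The final step --- ``exploit the partition together with the sub-Gaussian tails, with Assumption~\ref{Ass:ArmsDistribution} as the natural lever'' --- is an announcement, not an argument, and by your own admission it carries essentially all of the content. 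Worse, the intermediate inequality you would need, namely $\ubf^\top\E\bigl[\x_{T+1,s}\x_{T+1,s}^\top\mid\Fbar_{s-1}\bigr]\ubf\leq\max_k\ubf^\top\Sigma_k\ubf$, fails in general: take $K$ i.i.d.\ standard Gaussian arms and $\ubf=\w/\norm{\w}$; then $\ubf^\top\x_{T+1,s}=\max_k\,\ubf^\top\x_k$ is the maximum of $K$ independent standard Gaussians, whose second moment already exceeds $1$ for $K=3$ and grows like $2\log K$, while $\max_k\opnorm{\Sigma_k}=1$. So no sub-Gaussian tail bound will deliver the domination you defer to, and Assumption~\ref{Ass:ArmsDistribution} pushes in the wrong direction for this purpose: via Lemma~10 of \cite{oh2020sparsity} it yields the \emph{lower} bound $\Sigmabf_n\succeq(2\nu\omega_\Xcal)^{-1}\Sigmabf$ (this is exactly how the paper uses it inside Lemma~\ref{lem:invertibility}), i.e., it prevents the selected-arm covariance from collapsing in some directions, not from inflating.

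For comparison, the paper's proof is a two-line variational computation that asserts that, conditionally on $\Fbar_{s-1}$, the chosen arm $\x_{T+1,s}$ ``admits covariance $\Sigma_{k(s)}$ for some $k(s)\in[K]$'', after which the bound is immediate. That assertion amounts to treating the selected index as $\Fbar_{s-1}$-measurable, which is precisely the selection bias you identified: under the greedy policy the index picked at round $s$ depends on the realized decision set $\Dcal_{T+1,s}$, so the conditional covariance is the mixture $\sum_k\E\bigl[\x_k\x_k^\top\Ind\{E_k\}\bigr]$, not a single marginal $\Sigma_{k(s)}$. In short, your diagnosis of where the difficulty lies is sharper than the paper's treatment of it, but a diagnosis is not a proof: as written, your proposal leaves the decisive step unestablished, and the Gaussian example above indicates that it cannot be established at the stated constant without some dependence on $K$ (or a different formulation of the lemma).
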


\begin{proof}[Proof of Lemma \ref{lem:controlsto}]

By definition of $\Sigmabf_{n}$ and the operator norm, we have
\begin{align*}
    \opnorm{\Bhat^\top \Sigmabf_{n} \overline{\mathbf{M}} } &= \max_{\ubf \in \Scal^{\tilde{r}-1},\, \vbf \in \Scal^{\bar{r}-1}}\left\lbrace  \frac{1}{n}\sum_{s=1}^n \mathbb{E}\left[ \langle \ubf, \Bhat^\top \x_{T+1,s}\rangle \langle \overline{\mathbf{M}}^{\top}\x_{T+1,s},\vbf \rangle   \bigg\vert \Fbar_{s-1}\right] \right\rbrace\\
    &= \max_{\ubf \in \Scal^{\tilde{r}-1},\, \vbf \in \Scal^{\bar{r}-1}}\left\lbrace  \frac{1}{n}\sum_{s=1}^n  \langle \ubf,\Bhat^\top \Sigmabf_{k(s)}\overline{\mathbf{M}}\vbf \rangle   \right\rbrace,
\end{align*}
where at any round $s\in [n]$, $\x_{T+1,s}$ admits covariance $\Sigmabf_{k(s)}$ for some $k(s)\in [K]$.

Consequently, 
\begin{align*}
    \opnorm{\Bhat^\top \Sigmabf_{n} \overline{\Bbf} } &\leq  \max_{k \in [K]} \opnorm{\Sigma_{k}}.
\end{align*}

\end{proof}

\begin{lemma}
\label{lem:invertibility}
Let Assumptions \ref{Ass:BoundedNorms} and \ref{Ass:ArmsDistribution} be satisfied. Let $\Sigmabf$ satisfies the RSC($\bar{r}$) condition for some integer $\bar{r}\geq 1$ with constant $\kappa(\Sigmabf)$. Let $\overline{\Bbf} \in \reals^{d\times \bar{r}}$ admits orthonormal column vectors. Then, for any $n\geq N_0(x)$ with
$$
N_0(x):= C_{C_{\mathbf{z}}, \nu \omega_{\Xcal},\kappa(\Sigmabf),\max_{k\in [K]}\opnorm{\Sigmabf_k}} \left( \bar{r}(x+\log(2KN))\bigvee (x+\log(2KN))^2\right),
$$
we have with probability at least $1-e^{-x}$
$$
\lambda_{\bar{r}}(\overline{\Bbf}^\top\widehat{\Sigmabf}_{n}\overline{\Bbf})>\frac{\kappa(\Sigmabf)}{4\nu \omega_{\Xcal}}>0.
$$
\end{lemma}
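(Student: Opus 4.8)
The plan is to split the projected empirical covariance into a \emph{predictable} part and a \emph{martingale} fluctuation, bound each separately, and recombine through Weyl's inequality. Writing $\mathbf{y}_s=\overline{\Bbf}^\top\x_s\in\reals^{\bar r}$ for the projected chosen arm, we have $\overline{\Bbf}^\top\widehat{\Sigmabf}_n\overline{\Bbf}=\frac1n\sum_{s=1}^n\mathbf{y}_s\mathbf{y}_s^\top$, and since $\x_s$ is selected by the policy from $\Fcal_{s-1}$-information while its realization is drawn afterwards, I decompose
\begin{equation*}
\overline{\Bbf}^\top\widehat{\Sigmabf}_n\overline{\Bbf}
=\overline{\Bbf}^\top\Sigmabf_n\overline{\Bbf}
+\frac1n\sum_{s=1}^n\Bigl(\mathbf{y}_s\mathbf{y}_s^\top-\E\bigl[\mathbf{y}_s\mathbf{y}_s^\top\mid\Fcal_{s-1}\bigr]\Bigr),
\end{equation*}
where $\Sigmabf_n=\frac1n\sum_{s=1}^n\E[\x_s\x_s^\top\mid\Fcal_{s-1}]$. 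Denoting the second (martingale) term by $\mathbf{E}_n$, Weyl's inequality gives $\lambda_{\bar r}(\overline{\Bbf}^\top\widehat{\Sigmabf}_n\overline{\Bbf})\geq\lambda_{\bar r}(\overline{\Bbf}^\top\Sigmabf_n\overline{\Bbf})-\opnorm{\mathbf{E}_n}$, so it suffices to prove (i) the deterministic lower bound $\lambda_{\bar r}(\overline{\Bbf}^\top\Sigmabf_n\overline{\Bbf})\geq\frac{\kappa(\Sigmabf)}{2\nu\omega_\Xcal}$ and (ii) the fluctuation bound $\opnorm{\mathbf{E}_n}\leq\frac{\kappa(\Sigmabf)}{4\nu\omega_\Xcal}$ on an event of probability at least $1-e^{-x}$ whenever $n\geq N_0(x)$.

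For step (i), the key is to transfer the RSC lower bound on the population covariance $\Sigmabf$ to the conditional covariance of the \emph{selected} arms. Because $\overline{\Bbf}$ has $\bar r$ orthonormal columns, applying the RSC($\bar r$) definition to suitable matrices of rank at most $\bar r$ supported on $\Im(\overline{\Bbf})$ yields $\langle\overline{\Bbf}\ubf,\Sigmabf\,\overline{\Bbf}\ubf\rangle\geq\kappa(\Sigmabf)$ for every unit $\ubf\in\Scal^{\bar r-1}$. It then remains to show that each conditional expectation $\E[\x_s\x_s^\top\mid\Fcal_{s-1}]$ dominates $\frac{1}{2\nu\omega_\Xcal}\Sigmabf$ on $\Im(\overline{\Bbf})$; this is exactly the role of Assumption~\ref{Ass:ArmsDistribution}, whose skewness factor $\nu$ and correlation factor $\omega_\Xcal$ quantify how the covariance of the greedily selected (extreme-ranked) arm compares to the full arm covariance $\Sigmabf$. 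This is the argument underlying Lemma~4 of \cite{cella2022multi}, which I would adapt here to the fixed subspace $\Im(\overline{\Bbf})$; averaging over $s$ preserves the bound, giving $\lambda_{\bar r}(\overline{\Bbf}^\top\Sigmabf_n\overline{\Bbf})\geq\frac{\kappa(\Sigmabf)}{2\nu\omega_\Xcal}$.

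For step (ii), the obstacle is that $\mathbf{y}_s\mathbf{y}_s^\top$ is unbounded (subGaussian arms), so matrix Freedman cannot be applied directly. I would therefore truncate: Lemma~\ref{lem:norm-arms}, applied on the single downstream arm, gives $\max_{s\leq N}\norm{\overline{\Bbf}^\top\x_s}^2\lesssim\max_k\opnorm{\Sigma_k}\,\bigl(\bar r\bigvee(x+\log(2KN))\bigr)=:\tau^2$ with probability at least $1-e^{-x}$, so on this event the increments $\frac1n(\mathbf{y}_s\mathbf{y}_s^\top-\E[\mathbf{y}_s\mathbf{y}_s^\top\mid\Fcal_{s-1}])$ are bounded in operator norm by $R\lesssim\tau^2/n$, while the predictable quadratic variation satisfies $\opnorm{\sum_s\E[(\cdot)^2\mid\Fcal_{s-1}]}\lesssim\tau^2\max_k\opnorm{\Sigma_k}/n=:\sigma_F^2$. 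The matrix Freedman inequality of \cite{Oliveira2010ConcentrationOT} then bounds $\Pro(\opnorm{\mathbf{E}_n}\geq t)\leq2\bar r\exp\bigl(-\tfrac{t^2/2}{\sigma_F^2+Rt/3}\bigr)$; fixing $t=\frac{\kappa(\Sigmabf)}{4\nu\omega_\Xcal}$ and requiring the exponent to exceed $x+\log(2\bar r)$ forces $n\gtrsim\tau^2\,(x+\log(2KN))\sim\bar r(x+\log(2KN))\bigvee(x+\log(2KN))^2$, which is precisely the stated $N_0(x)$ (the two regimes coming from whether $\bar r$ or $x+\log(2KN)$ dominates the truncation level $\tau^2$). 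Combining (i) and (ii) through Weyl's inequality and a union bound over the two good events completes the proof.

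I would expect the main obstacle to be step (i): the transfer of the restricted-eigenvalue property from the i.i.d. context distribution $p$ to the conditional covariance of the arms chosen by the greedy policy. Unlike the i.i.d. setting, the selected arm at round $s$ depends on the running estimate, so one must exploit the precise inequality in Assumption~\ref{Ass:ArmsDistribution} (bounding intermediate-rank arms by the extreme ones) together with the skewness control $\nu$ to guarantee that no adversarial selection can collapse the covariance along a direction of $\Im(\overline{\Bbf})$. The truncation in step (ii) is secondary but still requires care to ensure that the bias removed by truncation is negligible on the high-probability event furnished by Lemma~\ref{lem:norm-arms}.
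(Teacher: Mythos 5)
Your proposal is correct and follows essentially the same route as the paper's proof: the same predictable-plus-martingale decomposition of $\overline{\Bbf}^\top\widehat{\Sigmabf}_n\overline{\Bbf}$, Weyl's inequality, the transfer of the RSC constant to $\Sigmabf_n$ through the domination $\Sigmabf_n\succeq(2\nu\omega_\Xcal)^{-1}\Sigmabf$ furnished by Assumption~\ref{Ass:ArmsDistribution} (Lemma~10 of \cite{oh2020sparsity}), and a truncation-plus-matrix-Freedman concentration step. The only difference is that you derive the concentration bound by hand, whereas the paper imports it as a black box from Lemma~2 of \cite{cella2022multi}, which is itself proved by exactly the Freedman--truncation argument you describe.
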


\begin{proof}[Proof of Lemma \ref{lem:invertibility}]
Recall that
$$
\widehat{\Sigmabf}_{n} :=\frac{1}{n}\sum_{s=1}^n  \x_{T+1,s}\x_{T+1,s}^\top,\quad \Sigmabf_{n} := \frac{1}{n}\sum_{s=1}^n \mathbb{E}\left[ \x_{T+1,s}\x_{T+1,s}^\top \bigg\vert \Fbar_{s-1}\right].
$$

We set $\overline{\V} = \Im(\overline{\Bbf})$. We have
\begin{align*}
    \opnorm{\overline{\Bbf}^\top\Sigmahat_n\overline{\Bbf} - \overline{\Bbf}^\top\Sigmabf_{n}\overline{\Bbf}} &= \max_{v\in \overline{\V}}\left\lbrace  \frac{1}{n}\sum_{s=1}^{n}\langle \x_{T+1,s},v \rangle^2 - \mathbb{E}\left[\langle \x_{T+1,s},v\rangle^2 \vert \Fcal_{s-1} \right] \right\rbrace.
\end{align*}
We apply Lemma 2 in \cite{cella2022multi} with  $T=1$ and $\Sigmabarhat_n$ and $d$ replaced by $\Bbar^\top\widehat{\Sigmabf}_{n}\Bbar$ and $\bar{r}$ respectively.
In view of Assumption \ref{Ass:BoundedNorms}, we get for any $x>0$, with probability at least $1-e^{-x}$, for any $n\in [N]$,
\begin{align}
    &\opnorm{\Bbar^\top\Sigmahat_n\Bbar - \Bbar^\top\Sigmabf_{n}\Bbar} \leq C \max_{k \in [K]}  \opnorm{\Bbar^\top\Sigmabf_k\Bbar} \left[ \left( \sqrt{ \bar{r}} \bigvee  \sqrt{x+\log(2KN)} \right) \sqrt{\frac{x+ \log(2\bar{r})}{n}}\right.\notag\\ 
    &\hspace{6.5cm}\left. \bigvee \biggl(\bar{r} \bigvee  (x+\log(2KN)\biggr)\frac{x+ \log(2\bar{r})}{n}\right]=:\overline{\lambda}(x),
\end{align}
where the constant $C>0$ can depend only $C_{\mathbf{z}}$.

We deduce from the previous display for any $x>0$, with probability at least $1-e^{-x}$
$$
\lambda_{\bar{r}}(\Bbar^\top\Sigmahat_n\Bbar) \geq \lambda_{\bar{r}}(\Bbar^\top\Sigmabf_{n}\Bbar) - \overline{\lambda}(x).
$$
By definition of the RSC($\bar{r}$) condition, we immediately get $\lambda_{\bar{r}}(\Bbar^\top\Sigmabf_n\Bbar)\geq \kappa(\Sigmabf_n)$. Next, relying on \citep[Lemma 10]{oh2020sparsity}, under Assumption \ref{Ass:ArmsDistribution} we have
\begin{equation}\label{Eq:UBProofLemma2}
    \Sigmabf_{n} = \frac{1}{n}\sum_{s=1}^n \E\left[ \x_{T+1,s}  \x_{T+1,s}^\top  \Big| \Fbar_{s-1} \right]  \succeq {\left(2 \nu \omega_\Xcal\right)}^{-1} \Sigmabf. 
\end{equation}
Hence if $\Sigmabf$ satisfies the RSC($\bar{r}$) condition with constant $\kappa(\Sigmabf)$, then $\Sigmabf_n$ satisfies the RSC($\bar{r}$) condition with constant $\frac{\kappa(\Sigmabf)}{2\nu \omega_{\Xcal}}$. This fact combined with the last two displays gives, provided that $\overline{\lambda}(x)\leq \frac{\kappa(\Sigmabf)}{4\nu \omega_{\Xcal}}$, with probability at least $1-e^{-x}$
$$
\lambda_{\bar{r}}(\Bbar^\top\Sigmahat_n\Bbar) \geq \frac{\kappa(\Sigmabf)}{4\nu \omega_{\Xcal}}.
$$

\end{proof}

Recall that $\Bhat$ is built based on the recorded history of $T$ prior tasks. Thus, it can be considered frozen when we consider the new $(T+1)$-th task run on $N$ rounds and we can readily apply Lemma \ref{lem:invertibility} with $\Bbar = \Bhat$.

We will need the following result. 

\begin{lemma}
\label{lem:norm-arms}
Let Assumption \ref{Ass:BoundedNorms} be satisfied. Consider an orthogonal projection $P_U$ of rank $\bar{r}$. Then, for any $x>0$, with probability at least $1-e^{-x}$,
\begin{align}
    \max_{1\leq n\leq N}\max_{1\leq t\leq T}\max_{\x_{t,n}\in \mathcal{D}_{t,n}} \|P_U(\x_{t,n})\|_2^2 \leq C_{T,K,N,\bar{r}}(x), 
\end{align}
where
\begin{align*}
& C_{T,K,N,\bar{r}}(x) := \max_{k \in [K]} \mathrm{trace}(P_U\Sigma_{k}P_U)\\
&\hspace{0.5cm}+ C_{\mathbf{z}}\left( \max_{k \in [K]}\|P_U\Sigma_{k}P_U\|_{\rm F} \sqrt{ (x+\log(2TKN))} +  \max_{k \in [K]}\opnorm{P_U\Sigma_{k}P_U} (x+\log(2TKN)\right).
\end{align*}

We can deduce from the last two displays, with the same probability
\begin{align}
    \max_{1\leq t\leq T}\max_{\x_{t,n}\in \mathcal{D}_{t,n}} \|P_U(\x_{t,n})\|_2^2 \leq C \max_{k \in [K]}\{\opnorm{\Sigma_k}\}\left( \bar{r} \bigvee (x+\log(2TKN))\right), 
\end{align}
for some constant $C>0$ that can depend only $C_{\mathbf{z}}$.
\end{lemma}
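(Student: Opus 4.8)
The plan is to recognize each squared projected arm norm as a quadratic form in a subGaussian vector with independent coordinates, invoke a Hanson--Wright concentration inequality for each such form, and then take a union bound over the $TKN$ arms. The simplified second bound is then obtained by crude spectral estimates.

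First I would fix an arm index $k\in[K]$, a task $t\in[T]$ and a round $n\in[N]$. By Assumption \ref{Ass:BoundedNorms} the corresponding arm can be written $\x = \Sigma_k^{1/2}\mathbf{z}$, where $\mathbf{z}$ has independent, mean-zero, unit-variance subGaussian entries whose $\psi_2$ norm is controlled by $C_{\mathbf{z}}$. Setting $M_k := \Sigma_k^{1/2} P_U \Sigma_k^{1/2}$, which is symmetric positive semidefinite, one has $\|P_U(\x)\|_2^2 = \x^\top P_U \x = \mathbf{z}^\top M_k \mathbf{z}$. Since for any matrices $A,B$ the products $AB$ and $BA$ share the same nonzero eigenvalues, the matrices $M_k = (\Sigma_k^{1/2}P_U)\,\Sigma_k^{1/2}$ and $P_U\Sigma_k P_U = (P_U\Sigma_k)\, P_U$ both have the same nonzero eigenvalues as $\Sigma_k P_U$; being symmetric and positive semidefinite, they therefore have identical trace, Frobenius and operator norms. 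In particular $\E[\mathbf{z}^\top M_k \mathbf{z}] = \tr(M_k) = \tr(P_U \Sigma_k P_U)$.

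Next I would apply the Hanson--Wright inequality to the quadratic form $\mathbf{z}^\top M_k \mathbf{z}$: for any $u>0$, with probability at least $1-e^{-u}$,
\begin{equation*}
\mathbf{z}^\top M_k \mathbf{z} - \tr(P_U\Sigma_k P_U) \le C\Big( \|P_U\Sigma_k P_U\|_{\rm F}\,\sqrt{u} + \opnorm{P_U\Sigma_k P_U}\, u \Big),
\end{equation*}
for a constant $C>0$ depending only on $C_{\mathbf{z}}$ (this absorbs the powers of $C_{\mathbf{z}}$ from the subGaussian norm and matches the prefactor displayed in the statement after renaming the constant). Choosing $u = x + \log(2TKN)$ and taking a union bound over the $TKN$ arms, indexed by the triples $(t,n,k)$, controls the total failure probability by $e^{-x}$; maximizing each of the three terms over $k\in[K]$ yields the first displayed bound.

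Finally, the simplified form follows from the crude spectral bounds $\tr(P_U\Sigma_k P_U)\le \bar r\,\opnorm{\Sigma_k}$, $\|P_U\Sigma_k P_U\|_{\rm F}\le \sqrt{\bar r}\,\opnorm{\Sigma_k}$ and $\opnorm{P_U\Sigma_k P_U}\le \opnorm{\Sigma_k}$, which hold because $P_U\Sigma_k P_U$ has rank at most $\bar r$, all its eigenvalues lie in $[0,\opnorm{\Sigma_k}]$, and $\opnorm{P_U}=1$; combining these with the elementary inequality $\sqrt{\bar r}\,\sqrt{u}\le \bar r \vee u$ collapses the bound to $C\,\opnorm{\Sigma_k}\big(\bar r \vee (x+\log(2TKN))\big)$. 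The only genuine technical ingredient is the Hanson--Wright estimate together with the spectral identification of $M_k$ with $P_U\Sigma_k P_U$; the remaining steps are a routine union bound and elementary simplifications.
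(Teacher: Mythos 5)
Your proof is correct and follows essentially the same route as the paper: write each squared projected arm norm as a quadratic form in a vector with independent subGaussian entries, apply Hanson--Wright, take a union bound over the $TKN$ arms with $u = x+\log(2TKN)$, and obtain the simplified bound from the crude spectral estimates $\mathrm{trace}(P_U\Sigma_k P_U)\le \bar r\,\opnorm{\Sigma_k}$, $\|P_U\Sigma_k P_U\|_{\rm F}\le \sqrt{\bar r}\,\opnorm{\Sigma_k}$, $\opnorm{P_U\Sigma_k P_U}\le\opnorm{\Sigma_k}$. The only differences are cosmetic: the paper applies Hanson--Wright conditionally on the filtration and glues the $N$ rounds together via the chain rule and Bernoulli's inequality, whereas you use a plain union bound on the marginal laws (equally valid under Assumption \ref{Ass:BoundedNorms}), and your explicit identification of the spectral data of $\Sigma_k^{1/2}P_U\Sigma_k^{1/2}$ with that of $P_U\Sigma_k P_U$ makes rigorous a step the paper's write-up glosses over.
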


\begin{proof}[Proof of Lemma \ref{lem:norm-arms}]
In view of Assumption \ref{Ass:BoundedNorms}, we have $\|P_U(\x_{t,n})\|_2^2 = \mathbf{z}_{t,n}^\top P_U\Sigma_{k}P_U\mathbf{z}_{t,n}$ for some $k\in [K]$. We apply now the Hanson-Wright's inequality conditionally on $\Fbar_{n-1}$ to get for any $x>0$
\begin{align*}
    &\mathbb{P}\left(   \mathbf{z}_{t,n}^\top \Sigma_{k}\mathbf{z}_{t,n} \leq \mathbb{E}\left[  \mathbf{z}_{t,n}^\top P_U\Sigma_{k}P_U\mathbf{z}_{t,n}   \big \vert \Fbar_{n-1} \right]  + C_{\mathbf{z}}\left(\sqrt{\|P_U\Sigma_{k}P_U\|_{\rm F}^2 x} + \opnorm{P_U\Sigma_{k}P_U}\ x\right)   \big\vert \Fbar_{n-1} \right) \geq 1 - e^{-x}.
\end{align*}
Note that $\mathbb{E}\left[  \mathbf{z}_{t,n}^\top P_U\Sigma_{k}P_U\mathbf{z}_{t,n}   \big \vert \Fbar_{n-1} \right]  = \mathrm{trace}(P_U\Sigma_{k}P_U)$ since $\mathbf{z}_k$ is a zero mean isotropic random vector. Define the event $\Omega_n \in \Fbar_{n}$ as
\begin{align*}
\Omega_n &= \bigcap_{t=1}^{T}\bigcap_{k=1}^{K} \biggl\lbrace \mathbf{z}_{t,n}^\top P_U\Sigma_{k}P_U \mathbf{z}_{t,n} \leq \mathrm{trace}(P_U\Sigma_{k}P_U)\\
&\hspace{1cm}+ C_{\mathbf{z}}\left(\sqrt{\|P_U\Sigma_{k}P_U\|_{\rm F} (x+\log(2TKN))} + \opnorm{P_U\Sigma_{k}P_U}\ (x+\log(2TKN)\right)  \biggr\rbrace
\end{align*}
Combining the last two displays with a simple union bound gives for any $n\in [N]$ and any $x>0$,
\begin{align}
\label{eq:omeganinterm1-bis}
    \mathbb{P}\left(   \Omega_n     \big\vert \Fbar_{n-1}  \right) \geq 1-\frac{1}{2N}e^{-x}.
\end{align}

We define
$$
\bar{\Omega}_{N} = \bigcap_{n=0}^{N} \Omega_n \in \Fbar_{{N}},
$$
where $\Omega_0 = \Omega$ (the sample space). The Bayes rule combined with \eqref{eq:omeganinterm1-bis} gives
\begin{align}
    \mathbb{P}\left(   \bar{\Omega}_{N} \right) = \prod_{n=1}^{N}    \mathbb{P}\left( \Omega_n \big\vert \bigcap_{k=0}^{n-1} \Omega_k  \right) \geq \left( 1-\frac{1}{2N}e^{-x} \right)^{N}.
\end{align}
Bernoulli's inequality ($(1+x)^n\geq 1+nx$ for any $x>-1$ and integer $n\geq 1$) gives 
\begin{align}
\label{eq:OmegabarN}
    \mathbb{P}\left(   \bar{\Omega}_{N} \right) = \prod_{n=1}^{N}    \mathbb{P}\left( \Omega_n \big\vert \bigcap_{k=0}^{n-1} \Omega_k  \right) \geq 1-\frac{1}{2}e^{-x}.
\end{align}
\end{proof}

\begin{lemma}\label{Le:QuadraticForm}
Considering a sequence of random covariates $\left(\x_s\right)_{s=1}^n\in\reals^d$ satisfying Assumption \ref{Ass:BoundedNorms}, where for any given policy $\pi$, $\x_s$ is $\Fcal_{s-1}$-measurable. Let $\A\in\reals^{d\times \tilde{r}}$ and $\B\in\reals^{d\times r}$ be matrices with orthonormal columns. Then, for any $x>0$, we have with probability at least $1-e^{-x}$, for any $n\in [N]$
\begin{align*}
&\opnorm{\A^\top\Sigmahat_n\B - \A^\top\Sigmabf_n\B}\\
&\hspace{0.5cm}\lesssim C_{\mathbf{z}}^2  \max_{1\leq k \leq K }\left\lbrace\opnorm{\Sigma_{k}}\right\rbrace \left(\sqrt{\frac{\max\{r,\tilde{r}\}}{n}} \bigvee \sqrt{\frac{x+ \log(N)}{n}} \bigvee \frac{\max\{r,\tilde{r}\}}{n} \bigvee \frac{x+ \log(N)}{n} \right).
\end{align*}

\end{lemma}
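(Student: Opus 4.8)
The plan is to read the quantity as the operator norm of a normalized sum of matrix martingale differences and to control it with an intrinsic-dimension matrix Freedman inequality after a truncation of the arm norms. Setting $\M_s := \A^\top\x_s\x_s^\top\B - \E[\A^\top\x_s\x_s^\top\B\mid\Fcal_{s-1}]\in\reals^{\tilde r\times r}$, the $\Fcal_{s-1}$-measurability of $\x_s$ makes $(\M_s)_{s}$ a martingale difference sequence and $\A^\top\Sigmahat_n\B-\A^\top\Sigmabf_n\B=\frac1n\sum_{s=1}^n\M_s$. Because the increments are rectangular I would first pass to the self-adjoint dilation $\mathcal{D}(\M_s)=\begin{pmatrix}\0 & \M_s\\ \M_s^\top & \0\end{pmatrix}$, which is linear and satisfies $\opnorm{\mathcal{D}(\M_s)}=\opnorm{\M_s}$, thereby reducing the task to bounding the operator norm of a self-adjoint matrix martingale of ambient dimension $\tilde r+r$.

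Second, since $\x_s\x_s^\top$ is only sub-exponential the increments are not almost surely bounded, so I would insert the truncation step already used for Lemma \ref{lem:invertibility}. Applying Lemma \ref{lem:norm-arms} to the orthogonal projections onto $\Im(\A)$ and $\Im(\B)$ produces an event $\bar\Omega_N$ of probability at least $1-\tfrac12 e^{-x}$ on which $\norm{\A^\top\x_s}^2\vee\norm{\B^\top\x_s}^2\le R^2$ for all $s\le N$, with $R^2\lesssim C_{\mathbf z}^2\max_k\opnorm{\Sigma_k}\,(\max\{r,\tilde r\}\vee(x+\log N))$; on this event each dilated increment obeys $\opnorm{\mathcal{D}(\M_s)}\le 2R^2$, supplying the almost-sure range parameter. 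For the predictable quadratic variation I would bound $\opnorm{\sum_s\E[\M_s\M_s^\top\mid\Fcal_{s-1}]}$ and its transpose: expanding $\M_s\M_s^\top$ and using the sub-Gaussian fourth-moment control of Assumption \ref{Ass:BoundedNorms}, each conditional term is dominated in the positive-semidefinite order by the compressed covariances, so the variance proxy satisfies $V\lesssim n\,C_{\mathbf z}^2(\max_k\opnorm{\Sigma_k})^2$. Crucially, the compressed covariances $\A^\top\Sigma_k\A$ and $\B^\top\Sigma_k\B$ have rank at most $\tilde r$ and $r$, so the intrinsic dimension of the dilated variation is at most $\max\{r,\tilde r\}$.

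Third, I would feed $V$ and $R^2$ into the intrinsic-dimension matrix Freedman/Bernstein inequality \citep{Oliveira2010ConcentrationOT} restricted to $\bar\Omega_N$. The effective-dimension factor $\max\{r,\tilde r\}$ (replacing the ambient $\tilde r+r$) contributes the two terms $\sqrt{\max\{r,\tilde r\}/n}$ and $\max\{r,\tilde r\}/n$ from the in-expectation part of the estimate, while the confidence level $x$ together with the union bound over $n\in[N]$ contributes $\sqrt{(x+\log N)/n}$ and $(x+\log N)/n$; every term carries the common prefactor $C_{\mathbf z}^2\max_k\opnorm{\Sigma_k}$ after dividing the sum by $n$. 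Combining this deviation bound with $\Pro(\bar\Omega_N)\ge 1-\tfrac12 e^{-x}$ by a union bound, and rescaling constants, yields the stated max-of-four-terms estimate holding simultaneously for all $n\in[N]$. I note that this is precisely the rectangular ($\A\ne\B$) analogue of Lemma~2 in \cite{cella2022multi}, so the result could alternatively be obtained by invoking that lemma directly on the dilated martingale.

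The delicate part is the truncation-and-effective-rank bookkeeping rather than the concentration machinery itself: I must choose the truncation radius $R$ to match exactly the arm-norm control of Lemma \ref{lem:norm-arms} so the discarded event costs only $O(e^{-x})$, verify that re-centering the truncated increments perturbs $\Sigmabf_n$ negligibly so the martingale-difference structure survives, and track the intrinsic dimension through the dilated variation so that $\max\{r,\tilde r\}$ appears as $\sqrt{\max\{r,\tilde r\}/n}$ rather than inside a logarithm or as a cruder product $\sqrt{\max\{r,\tilde r\}(x+\log N)/n}$. The remaining ingredients---dilation, Freedman's inequality and the union bound over $n$---are routine.
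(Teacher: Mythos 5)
There is a genuine gap, and it sits exactly at the point you yourself flag as ``the delicate part'': the matrix-Freedman-plus-truncation route cannot produce the additive dependence $\frac{\max\{r,\tilde r\}}{n}\bigvee\frac{x+\log N}{n}$ claimed in the lemma; it inherently produces a product. Every Freedman/Bernstein inequality for matrix martingales, including the intrinsic-dimension variants, yields a deviation at confidence level $e^{-x}$ of the shape
\[
\frac{1}{n}\sqrt{V\left(x+\log D+\log N\right)}\;+\;\frac{R\left(x+\log D+\log N\right)}{n},
\]
where $R$ is the almost-sure bound on the increments' operator norm. After your truncation via Lemma \ref{lem:norm-arms}, $R\asymp C_{\mathbf{z}}^2\max_k\opnorm{\Sigma_k}\left(\max\{r,\tilde r\}\vee(x+\log N)\right)$, and this scale is unavoidable since $\opnorm{\A^\top\x_s\x_s^\top\B}=\|\A^\top\x_s\|\,\|\B^\top\x_s\|$ genuinely lives at order $\sqrt{r\tilde r}$. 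Hence the linear term of your bound is of order $\left(\max\{r,\tilde r\}\vee(x+\log N)\right)(x+\log N)/n$ --- precisely the ``cruder product'' you hoped to avoid. Intrinsic dimension cannot repair this: it only replaces the ambient dimension inside the logarithmic prefactor (inside $\log D$), and does not alter the range-times-deviation structure of the tail. The same is true of your fallback suggestion of invoking Lemma 2 of \cite{cella2022multi} on the dilated martingale: that is exactly the argument the paper uses for Lemma \ref{lem:invertibility}, and there it visibly delivers only the product form, with $N_0(x)\asymp \bar r\,(x+\log(2KN))\bigvee(x+\log(2KN))^2$. Your proof therefore establishes a strictly weaker statement, which downstream would force $N_0\gtrsim r(x+\log N)$ instead of $r\vee(x+\log N)$ and degrade the final regret guarantees.

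The paper's proof avoids this by never forming a matrix martingale at all. It discretizes both spheres with $\epsilon$-nets $\mathcal{N}_\epsilon\subset\Sb^{\tilde r-1}$ and $\mathcal{M}_\epsilon\subset\Sb^{r-1}$ of cardinalities $(1+2/\epsilon)^{\tilde r}$ and $(1+2/\epsilon)^{r}$, and, for each fixed pair $(\ubf,\vbf)$, studies the \emph{scalar} martingale with increments $\langle\ubf,\A^\top\x_s\rangle\langle\B^\top\x_s,\vbf\rangle-\E\left[\,\cdot\,\vert\Fcal_{s-1}\right]$. These one-dimensional increments are subexponential with a dimension-free parameter of order $C_{\x}^2$ (a product of two $C_{\x}$-subGaussian variables), so the scalar Bernstein inequality (Theorem 3.14 in \cite{bercurio}) gives a deviation of order $\sqrt{t/n}\bigvee t/n$ with no dimensional factor in front; the dimension then enters only \emph{additively} in the exponent $t=x+\log N+O(r+\tilde r)$, through the union bound over the nets and the $N$ rounds. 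This additive entry of the dimension is exactly what produces the stated bound $\sqrt{\max\{r,\tilde r\}/n}\bigvee\sqrt{(x+\log N)/n}\bigvee\max\{r,\tilde r\}/n\bigvee(x+\log N)/n$, and it is the step your matrix-concentration approach cannot replicate. (A secondary unresolved point in your sketch is the re-centering bias introduced by truncation, but the dominant obstruction is the product-versus-max issue above.)
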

\begin{proof}
By definition of $\Sigmahat_n$ and $\Sigmabf_n$, we have
\begin{align*}
     &\opnorm{\A^\top\Sigmahat_n\B - \A^\top\Sigmabf_n\B} = \sup_{\ubf \in\Sb^{\tilde{r}-1},\vbf\in\Sb^{r-1}} \left\lbrace \ubf^\top \left(\A^\top\Sigmahat_n\B - \A^\top\Sigmabf_n\B\right)\vbf\right\rbrace .
\end{align*}

The first step of the proof is a standard $\epsilon$-net approximation argument. Fix $\epsilon\in (0,1/2)$. An $\epsilon$-net $\mathcal{N}_\epsilon$ of $\Sb^{m}$ is a subset of $\Sb^{m}$ such that for any $\ubf\in \Sb^{m}$, there exists $\bar{\ubf}\in \mathcal{N}_\epsilon$ such that $\|\ubf-\bar{\ubf}\|\leq \epsilon$. 
Corollary 4.2.13 in \cite{vershynin2019} guarantees the existence of an $\epsilon$-net $\mathcal{N}_\epsilon$ of $\Sb^{\tilde{r}-1}$ and an $\epsilon$-net $\mathcal{M}_\epsilon$ of $\Sb^{\bar{r}-1}$ such that
\begin{align}
    \label{eq:Nepscard}
    |\mathcal{N}_\epsilon| \leq \left(1 + \frac{2}{\epsilon} \right)^{\tilde{r}},\quad |\mathcal{M}_\epsilon| \leq \left(1 + \frac{2}{\epsilon} \right)^{\bar{r}},
\end{align}
such that
\begin{align}
    \label{eq:quadapprox}
    \sup_{\ubf \in\Sb^{\tilde{r}-1},\vbf\in\Sb^{r-1}} \left\lbrace \ubf^\top \left(\A^\top\Sigmahat_n\B - \A^\top\Sigmabf_n\B\right)\vbf\right\rbrace &\leq \frac{1}{1-2\epsilon} \max_{\bu\in \mathcal{N}_\epsilon,\bv\in \mathcal{M}_\epsilon }\left\lbrace \ubf^\top \left(\A^\top\Sigmahat_n\B - \A^\top\Sigmabf_n\B\right)\vbf\right\rbrace.
\end{align}

\noindent We now fix $\bu\in \mathcal{N}_\epsilon$, $\bv\in \mathcal{M}_\epsilon$ and we define $( \mathbf{M}_n)_{n\geq 0}$ as follows. Let $ \mathbf{M}_0=0$ a.s. and for any $n\geq 1$
$$
 \mathbf{M}_n:=  \frac{1}{n}\sum_{s=1}^n \left( \langle \ubf, \Abf^{\top}\x_{T+1,s}\rangle \langle \Bbf^{\top}\x_{T+1,s},\vbf \rangle-  \mathbb{E}\left[ \langle \ubf, \Abf^{\top}\x_{T+1,s}\rangle \langle \Bbf^{\top}\x_{T+1,s},\vbf \rangle \bigg\vert \Fbar_{s-1}\right]\right).
$$
From now on, we drop the subscript $T+1$ for the sake of brevity. By construction $\left(\M_n\right)_{n\geq 0}$ is a square-root integrable martingale adapted to $\Fcal_{n-1}$. 
We then define its increasing process by $\langle\M\rangle_0$. and for all $n\geq 1$ we have
\begin{align*}
    \langle\M\rangle_n = \sum_{s=1}^n \E\left[\left(\M_s - \M_{s-1}\right)^2 |\Fcal_{s-1}\right].
\end{align*}
We define for each $n\geq 1$,
\begin{align*}
    V_n &= \E\left[ \left( \M_n - \M_{n-1} \right)^2 | \Fcal_{n-1} \right] \leq \E\left[\left( \ubf^\top \A^\top \x_n\x_n^\top \B \vbf - \ubf^\top \A^\top \E\left[\x_n\x_n^\top | \Fcal_{n-1}\right] \B \vbf \right)^2 | \Fcal_{n-1} \right]\\
    &\leq \E\left[\left( \ubf^\top \A^\top \x_n\x_n^\top \B \vbf \right)^2 | \Fcal_{n-1} \right] - \left(\ubf^\top \A^\top \E\left[\x_n\x_n^\top | \Fcal_{n-1}\right] \B \vbf \right)^2 \\
    &\leq \E\left[\left(\langle \ubf, \Abf^{\top}\x_{n}\rangle \langle \Bbf^{\top}\x_{n},\vbf \rangle \right)^2 | \Fcal_{n-1}\right].
\end{align*}
According to Assumption \ref{Ass:ArmsDistribution} and the definition of $C_{\x}$ we have
\[
    V_n \leq \E\left[\left(z_1 z_2\right)^2\right],
\]
where $z_1$ and $z_2$ are $C_{\x}$-subGaussian random variables. Hence, their product is subExponential with parameter $2C^2_{\x}$ (see \cite{vershynin2019}[Lemma 2.7.6]). That being said, its second moment is upper bounded by $C^4_{\x}$ which gives
\[
    V_n \lesssim C^4_{\x}\leq C_{\mathbf{z}}^4  \max_{k \in [K] }\opnorm{\Sigma_{k}}^2.
\]
In order to apply Bernstein's inequality \citep{bercurio}[see Theorem 3.14], we need to control the $p$-th moment of $\Delta\M_n$
\[
    \Delta\M_n := \M_n - \M_{n-1} = \ubf^\top \A^\top \left( \x_n \x_n^\top - \E\left[\x_{n}\x_{n}^\top|\Fcal_{n-1}\right] \right) \B \vbf.
\]
Specifically, we have that for any $p\geq 3$ since we are handling a subExponential random variable  with parameter $C^2_{\x}$, the following holds
\[
    \mathbb{E}\left[ \Delta\Mtilde^p_n\big\vert \Fcal_{n-1}\right] \leq (C^2_{\x} p)^p \leq (C^2_{\x} e)^p p! \leq p! \left(  C^4_{\x} \right) e^p \left( C^2_{\x}\right)^{p-2} \leq \frac{p!}{2} V_n c_2^{p-2}
\]
where we used $c_2 = e^3 C^2_{\x}$ and the second inequality holds thanks to Stirling's approximation. 

Finally, a combination of Theorem 3.14 in \cite{bercurio} and an union bound over the $N$ rounds and the $\epsilon$-nets gives the statement.

\end{proof}

\begin{lemma}\label{lem:technorm}
Let the assumptions of Theorem \ref{Th:NewBias2} be satisfied. Then we have for any $n\geq 1$ and any $x>0$, with probability at least $1-e^{-x}$,
$$
\|\Bhat^\top \X_n\etab_n\|\lesssim 
 \max_{1\leq k \leq K }\left\lbrace \opnorm{\Sigma_{k}}^{1/2}\right\rbrace \left( c_{\eta} C_{\mathbf{z}} \, (\log(4r) + x)  + \sigma\,  \,\sqrt{2\, n\,   (\log(4r) + x) }\right)\sqrt{r}.
$$
\end{lemma}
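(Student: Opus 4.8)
The plan is to reuse the martingale decomposition already isolated in the proof of Theorem~\ref{Th:NewBias2}. Writing the orthonormal columns of $\Bhat$ as $\bbf_1,\dots,\bbf_{\tilde r}$ with $\tilde r=\rho(\Bhat)$, that proof gives
\[
\norm{\Bhat^\top \X_n\etab_n}^2 = \sum_{j=1}^{\tilde r}\Big(\sum_{s=1}^n \eta_s\langle\x_s,\bbf_j\rangle\Big)^2 .
\]
Setting $M_n^{(j)}:=\sum_{s=1}^n\eta_s\langle\x_s,\bbf_j\rangle$, one has $\norm{\Bhat^\top\X_n\etab_n}\le\sqrt{\tilde r}\,\max_{1\le j\le\tilde r}\big|M_n^{(j)}\big|$, and on the event of \eqref{eq:rankWhat} we have $\tilde r\lesssim r$. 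Thus the problem reduces to a high-probability bound on a single scalar martingale $M_n^{(j)}$, after which a union bound over the $\tilde r$ coordinates supplies both the $\sqrt r$ prefactor and the $\log(4r)$ term.

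For fixed $j$ I would first verify that $(M_n^{(j)})_{n\ge0}$ is a martingale adapted to $(\Fcal_n)$: the increment $\xi_s:=\eta_s\langle\x_s,\bbf_j\rangle$ is $\Fcal_s$-measurable, and since $\x_s$ is $\Fcal_{s-1}$-measurable (the arms are predictable under any policy) and $\E[\eta_s\mid\Fcal_{s-1}]=0$ by Assumption~\ref{Ass:subGaussnoise}, we get $\E[\xi_s\mid\Fcal_{s-1}]=\langle\x_s,\bbf_j\rangle\,\E[\eta_s\mid\Fcal_{s-1}]=0$. The two governing quantities are the predictable quadratic variation and the increment scale. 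Because $\langle\x_s,\bbf_j\rangle$ is predictable, $\E[\xi_s^2\mid\Fcal_{s-1}]=\langle\x_s,\bbf_j\rangle^2\,\E[\eta_s^2\mid\Fcal_{s-1}]\le\sigma^2\langle\x_s,\bbf_j\rangle^2$, so the total predictable variance is $\sum_{s=1}^n\E[\xi_s^2\mid\Fcal_{s-1}]\le\sigma^2\,\bbf_j^\top\X_n\X_n^\top\bbf_j$; using $\E[\langle\x_s,\bbf_j\rangle^2\mid\Fcal_{s-1}]=\bbf_j^\top\Sigma_{k(s)}\bbf_j\le\max_k\opnorm{\Sigma_k}$ together with concentration of this sub-exponential sum, this is $\lesssim\sigma^2\,n\,\max_k\opnorm{\Sigma_k}$. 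Conditionally on $\Fcal_{s-1}$ the increment $\xi_s$ is a scaled sub-Gaussian, hence sub-exponential with scale $\lesssim c_\eta\,|\langle\x_s,\bbf_j\rangle|$; Lemma~\ref{lem:norm-arms}, applied with the projection onto $\bbf_j$ (or onto $\Im(\Bhat)$), controls $\max_s|\langle\x_s,\bbf_j\rangle|\lesssim C_{\mathbf{z}}\max_k\opnorm{\Sigma_k}^{1/2}$ up to logarithmic factors, giving an increment scale of order $c_\eta C_{\mathbf{z}}\max_k\opnorm{\Sigma_k}^{1/2}$.

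With these two estimates in hand I would apply a Bernstein/Freedman-type inequality for martingales with conditionally sub-exponential increments, namely Theorem~3.14 of \cite{bercurio}, exactly as in the proof of Lemma~\ref{Le:QuadraticForm}, verifying the moment condition $\E[|\xi_s|^p\mid\Fcal_{s-1}]\le\tfrac{p!}{2}V_s\,c^{p-2}$ with $V_s=\sigma^2\langle\x_s,\bbf_j\rangle^2$ and $c\sim c_\eta C_{\mathbf{z}}\max_k\opnorm{\Sigma_k}^{1/2}$. This yields, with probability at least $1-e^{-x'}$,
\[
\big|M_n^{(j)}\big|\lesssim \sigma\,\max_k\opnorm{\Sigma_k}^{1/2}\sqrt{2n\,x'}\;+\;c_\eta C_{\mathbf{z}}\,\max_k\opnorm{\Sigma_k}^{1/2}\,x'.
\]
Choosing $x'=x+\log(4\tilde r)$, and since $\tilde r\lesssim r$, a union bound over $j\in[\tilde r]$ followed by multiplication by $\sqrt{\tilde r}\lesssim\sqrt r$ delivers the claimed estimate. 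The main obstacle is that, unlike in the i.i.d./fixed-design case, the arms are chosen adaptively, so both the predictable quadratic variation $\sigma^2\bbf_j^\top\X_n\X_n^\top\bbf_j$ and the increment scale $\max_s|\langle\x_s,\bbf_j\rangle|$ are random and must be controlled by martingale tools (the conditional-variance bound of Assumption~\ref{Ass:subGaussnoise} and the uniform arm-norm control of Lemma~\ref{lem:norm-arms}) rather than by a deterministic design assumption; keeping the leading $\sqrt n$ term governed by the true variance $\sigma^2$ while relegating $c_\eta$ to the lower-order additive term is precisely what forces a Bernstein rather than a crude sub-Gaussian argument.
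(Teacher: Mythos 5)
Your overall architecture is the same as the paper's: decompose $\norm{\Bhat^\top\X_n\etab_n}$ coordinate-wise into scalar martingales $M_n^{(j)}=\sum_{s=1}^n\eta_s\langle\x_s,\vhat_j\rangle$, apply a martingale Bernstein inequality (Theorem 3.14 of \cite{bercurio}), take a union bound over the $\tilde r$ coordinates to produce the $\log(4r)+x$ term and the $\sqrt{r}$ prefactor, and invoke \eqref{eq:rankWhat} to replace $\tilde r$ by $r$. However, there is a genuine gap in the step where you verify the Bernstein moment condition, and it is precisely the step where the paper's proof does something different. You treat $\x_s$ as $\Fcal_{s-1}$-measurable, so the increment $\xi_s=\eta_s\langle\x_s,\vhat_j\rangle$ is conditionally sub-Gaussian with the \emph{random} scale $c_\eta|\langle\x_s,\vhat_j\rangle|$, and you control that scale by $\max_s|\langle\x_s,\vhat_j\rangle|$ via Lemma \ref{lem:norm-arms}. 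But that lemma bounds the maximum of $O(KN)$ sub-Gaussian variables, so it necessarily yields $\max_s|\langle\x_s,\vhat_j\rangle|\lesssim C_{\mathbf{z}}\max_k\opnorm{\Sigma_k}^{1/2}\sqrt{x+\log(KN)}$; the $\sqrt{\log(KN)}$ factor is not removable by any truncation argument, since the maximum genuinely grows at that rate. Consequently your increment scale $c$, and hence the first (linear-in-$x$) term of the final bound, carries an extra $\sqrt{x+\log(KN)}$ factor, so your argument proves a strictly weaker statement than the lemma, whose right-hand side involves only $\log(4r)+x$ with no dependence on $K$ or $N$.

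The paper avoids this by exploiting the conditional independence $\x_n \indep \etab_n \mid \Fbar_{n-1}$ (rather than predictability of $\x_n$) and integrating out \emph{both} the arm and the noise in the conditional moments: by the tower property, $\E[|\xi_n|^p\mid\Fbar_{n-1}]$ is a $p$-th moment of a product of two sub-Gaussian variables, hence satisfies the Bernstein condition $\E[|\xi_n|^p\mid\Fbar_{n-1}]\leq \frac{p!}{2}V c^{p-2}$ with the \emph{deterministic} parameters $c\sim\sqrt{2}\,c_\eta C_{\x}$ and $V\leq\sigma^2\max_k\opnorm{\Sigma_k}$ (this is the paper's Fact 1). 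No truncation, no appeal to Lemma \ref{lem:norm-arms}, and no spurious logarithms appear. This also resolves an internal inconsistency in your write-up: you assume $\x_s$ is $\Fcal_{s-1}$-measurable to get the martingale property, but then write $\E[\langle\x_s,\vhat_j\rangle^2\mid\Fcal_{s-1}]=\vhat_j^\top\Sigma_{k(s)}\vhat_j$, which cannot hold under that assumption (the conditional expectation of a predictable quantity is the quantity itself). Under the paper's conditional-independence convention the conditional variance is deterministically bounded by $\sigma^2\max_k\opnorm{\Sigma_k}$, so the additional concentration step you propose for the quadratic variation $\sigma^2\vhat_j^\top\X_n\X_n^\top\vhat_j$ is also unnecessary. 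To repair your proof you should replace the predictable-arm/truncation treatment of the increments by the product-of-sub-Gaussians argument conditional on $\Fbar_{s-1}$.
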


\begin{proof}
We define $ \mathbf{M}_0=0$ a.s. and for any $n\geq 1$,
\begin{align}
  \mathbf{M}_n&:= \sum_{s=1}^n \langle \vhat_j, \x_{s} \rangle  \etab_{s} \notag.
\end{align}

\noindent By nature of $\alphahat_{T+1,n}$ in  \eqref{Eq:alphahat}, given the $\Fbar_{n-1}$, we select at round $n$ the arm $\x_{n}\in \mathcal{D}_{T+1,n}$ for the $T+1$-th task with $\mathcal{D}_{T+1,n}$ independent of $\etab_{n}$. This means that
$$
\x_{n} \indep \etab_{n} \bigg\vert \Fbar_{n-1}.
$$

Consequently, under Assumption \ref{Ass:BoundedNorms} and \ref{Ass:subGaussnoise}, $(\mathbf{M}_n)_{n\geq 0}$ is a square-root integrable martingale adapted to the filtration $\{\Fbar_{n}\}_{n\geq 0}$.

\noindent Define the corresponding increasing process by: $\langle M \rangle_0$ and for all $n\geq 1$
\begin{align}
 \langle \mathbf{M}\rangle_n = \sum_{i=1}^n \mathbb{E}\left[ (\mathbf{M}_{i} - \mathbf{M}_{i-1})^2\vert \Fbar_{i-1} \right].
\end{align}
Define for any $s\geq 1$
\begin{align}
\mathbf{V}_{s} &:=  \mathbb{E}\left[ (\mathbf{M}_{s} - \mathbf{M}_{s-1})^2\vert \Fbar_{s-1} \right].\notag\\
\end{align}

\noindent Under Assumptions \ref{Ass:BoundedNorms} and \ref{Ass:subGaussnoise}, we have for any $n\geq 1$
\begin{align}
\label{eq:boundVn}
\mathbf{V}_{s} &=\mathbb{E}\left[  \langle \vhat_j, \x_{s} \rangle^2  \etab_{s}^2 \vert \Fbar_{s-1} \right] = \sigma^2 \mathbb{E}\left[  \langle \vhat_j, \x_{s} \rangle^2  \vert \Fbar_{s-1} \right] \leq  \sigma^2 \max_{1\leq k \leq K }\left\lbrace\opnorm{\Sigma_{k}}\right\rbrace. 
\end{align}

\noindent \textbf{Fact 1.} We prove for any integer $p\geq 3$ and for all $1\leq n \leq N$,
\begin{align}\label{eq:bersntein_cond}
    \mathbb{E}\left[ |\mathbf{M}_{n} - \mathbf{M}_{n-1}|^p \vert \Fbar_{n-1}\right] &\leq \frac{p!}{2}\left(\sqrt{2} c_{\eta} C_{\x}\right) ^{p-2} \sigma^2\max_{k \in [K]} \Sigmabf_{k}, \quad a.s.
\end{align}

\begin{proof}[Proof of Fact 1]
By tower property of conditional expectation, we have
\begin{align}
\label{eq:towerprop}
    \mathbb{E}\left[ |\mathbf{M}_{n} - \mathbf{M}_{n-1}|^p \vert \Fbar_{n-1}\right] &=  \mathbb{E}\biggl[   \mathbb{E}\left[ |\mathbf{M}_{n} - \mathbf{M}_{n-1}|^p \vert \Fbar_{n-1},\x_{n} \right] \bigg\vert \Fbar_{n-1}\biggr]\notag\\
    &=\mathbb{E}\biggl[   \mathbb{E}\left[ \left|   \langle \vhat_j, \x_{n} \rangle  \etab_{n}  \right|^p \vert \Fbar_{n-1},\x_{n} \right] \bigg\vert \Fbar_{n-1}\biggr]\notag\\
   &=\mathbb{E}\biggl[   \mathbb{E}\left[ \left|\etab_{n}  \right|^p \vert \Fbar_{n-1},\x_{n} \right]  |\langle \vhat_j, \x_{n} \rangle|^{p} \bigg\vert \Fbar_{n-1}\biggr] .
\end{align}
Next, since both $\etab_n$ and $\x_n$ are subgaussian,
Lemma 5 in \cite{cella2022multi} gives for any $p\geq 3$
\begin{align}
\label{eq:towerprop2}
 \mathbb{E}\left[ \left|  \langle \vhat_j, \x_{n} \rangle  \eta_{n}   \right|^p \vert \Fbar_{n-1},\x_{n}\right] &\leq \sqrt{e(p-2)!}\;\sigma^2\; c_{\eta}^{p-2} |\langle \vhat_j, \x_{n} \rangle|^{p}\; ,\quad a.s.
\end{align}
In addition, for any $p\geq 3$,
$$
  \mathbb{E}\left[ \left|  \langle \vhat_j, \x_{n} \rangle  \right|^p \vert \Fbar_{n-1}\right] \leq \sqrt{e(p-2)!} \max_{k \in [K]}  \Sigmabf_{k} (\sqrt{2}\|\x_{n}\|_{\psi_2} )^{p-2}.
$$
The last two displays yield
\begin{align}
    \mathbb{E}\left[ |\mathbf{M}_{n} - \mathbf{M}_{n-1}|^p \vert \Fbar_{n-1}\right] &\leq  e(p-2)! \sigma^2 \max_{k \in [K]}\left\lbrace  \Sigmabf_{k}\right\rbrace (\sqrt{2}c_{\eta}\|\x_{n}\|_{\psi_2} )^{p-2}.
\end{align}
Thus we proved \eqref{eq:bersntein_cond}.

\end{proof}


Set $\bar{c}= \sqrt{2} c_{\eta} C_{\x}$. In view of \eqref{eq:bersntein_cond}, Theorem 3.14 in \cite{bercurio} gives for any positive $t,s$, with probability at least $1-e^{-n t}$,
$$
\mathbb{P}\left(  \mathbf{M}_{n} > n (\bar{c} t + \sqrt{2 t s }), \langle \mathbf{M} \rangle_{n} \leq n s    \right) \leq \exp(-n t).
$$
We study next the quadratic process $\langle \mathbf{M} \rangle_{n}$. In view of \eqref{eq:boundVn}, we immediately get
$$
\langle \mathbf{M} \rangle_{n} \leq \sigma^2 \max_{1\leq k \leq K }\left\lbrace \opnorm{\Sigma_{k}}\right\rbrace n,\quad a.s.
$$
 We take $s= \sigma^2 \max_{1\leq k \leq K }\left\lbrace \opnorm{\Sigma_{k}}\right\rbrace$ and $t'=nt$. 
Combining the last two displays gives with probability at least $1-2e^{-t'}$
\begin{align}
\label{eq:boundMn}
    \bigg\vert \langle \vhat_j, \X_n\etab_{n} \rangle \bigg\vert = \bigg\vert\sum_{s=1}^n \langle \vhat_j, \x_{s} \rangle\,  \etab_{s} \bigg\vert\leq  \sqrt{2} c_{\eta} C_{\x} \, t'  + \sigma\, \max_{1\leq k \leq K }\left\lbrace \opnorm{\Sigma_{k}}^{1/2}\right\rbrace \,\sqrt{2\, n\,   t' }.
\end{align}
We define now the event 
$$
\Omega_n = \bigcap_{1\leq j \leq \rhat} \left\lbrace  \vert \langle \vhat_j, \X_n\etab_{n} \rangle \vert \leq \sqrt{2} c_{\eta} C_{\x} \, t'  + \sigma\, \max_{1\leq k \leq K }\left\lbrace \opnorm{\Sigma_{k}}^{1/2}\right\rbrace \,\sqrt{2\, n\,   t' }  \right\rbrace.
$$
An union bound gives
$$
\mathbb{P}\left(\Omega_n^c \right) \leq 2\rhat e^{-t'}.
$$
Now we set $t' = \log(2\rhat) + x$ for some $x>0$. Consequently, we obtain that $\mathbb{P}\left(\Omega_n \right)\geq 1-e^{-x}$.

We recall that $\rhat = \rho(\overline{\Pbf})\leq C r$ on an event of probability greater than $1-e^{-x}$ in view of \eqref{eq:rankWhat}.

Combining the last two observations, we get with probability at least $1-e^{-x}$
\begin{align*}
\|\Bhat^\top \X_n\etab_n\| &\lesssim \left( c_{\eta} C_{\x} \, (\log(4r) + x)  + \sigma\, \max_{1\leq k \leq K }\left\lbrace \opnorm{\Sigma_{k}}^{1/2}\right\rbrace \,\sqrt{2\, n\,   (\log(4r) + x) }\right)\sqrt{r}\\
&\lesssim  \max_{1\leq k \leq K }\left\lbrace \opnorm{\Sigma_{k}}^{1/2}\right\rbrace \left( c_{\eta} C_{\mathbf{z}} \, (\log(4r) + x)  + \sigma\,  \,\sqrt{2\, n\,   (\log(4r) + x) }\right)\sqrt{r}.
\end{align*}

\end{proof}

\end{document}